\numberwithin{equation}{section}
\newcommand{\BEAS}{\begin{eqnarray*}}
\newcommand{\EEAS}{\end{eqnarray*}}
\newcommand{\BEQ}{\begin{equation}}
\newcommand{\EEQ}{\end{equation}}
\newcommand{\BIT}{\begin{itemize}}
\newcommand{\EIT}{\end{itemize}}
\newcommand{\eg}{{\it e.g.}}
\newcommand{\ie}{{\it i.e.}}
\newcommand{\iid}{{\it i.i.d.}}
\newcommand{\reals}{{\mbox{\bf R}}}
\newcommand{\symm}{{\mbox{\bf S}}}
\newcommand{\argmin}{\mathop{\rm argmin}}
\def\<#1,#2>{\langle #1,#2\rangle}
\newcommand{\prox}{{\mathop{\textbf{prox}}}}
\newtheorem{theorem}{Theorem}
\newtheorem{remark}{Remark}
\newtheorem{assumption}{Assumption}[section]
\newtheorem{lemma}[theorem]{Lemma}
\theoremstyle{definition}
\long\def\@makecaption#1#2{
   \vskip 9pt
   \begin{small}
   \setbox\@tempboxa\hbox{{\bf #1:} #2}
   \ifdim \wd\@tempboxa > 5.5in
        \begin{center}
        \begin{minipage}[t]{5.5in}
        \addtolength{\baselineskip}{-0.95pt}
        {\bf #1:} #2 \par
        \addtolength{\baselineskip}{0.95pt}
        \end{minipage}
        \end{center}
   \else
	\hbox to\hsize{\hfil\box\@tempboxa\hfil}
   \fi
   \end{small}\par
}
\newcounter{oursection}
\newcounter{lecture}
\newcommand{\norm}[1]{{\left\vert\kern-0.25ex\left\vert\kern-0.25ex\left\vert #1
    \right\vert\kern-0.25ex\right\vert\kern-0.25ex\right\vert}}
\author{Ziheng Cheng\thanks{Contributed equally to this work.}
        \thanks{School of Mathematical Sciences, Peking University. 
        \textbf{Email:} \texttt{alex-czh@stu.pku.edu.cn}
        }
       \and
       Junzi Zhang\footnotemark[1] 
       \thanks{Citadel Securities (Work done prior to joining Citadel Securities). 
       \textbf{Email:} \texttt{saslascroyale@gmail.com}}
       \and
       Akshay Agrawal
       \thanks{Marimo Inc. 
       \textbf{Email:} \texttt{akshay@marimo.io}}
       \and 
       Stephen Boyd
       \thanks{Department of Electrical Engineering, Stanford University. \textbf{Email:} \texttt{boyd@stanford.edu}}
       }
\title{Joint Graph Learning and Model Fitting in Laplacian Regularized Stratified Models}
\begin{document}

\date{}
\maketitle

\begin{abstract}
\noindent Laplacian regularized stratified models (LRSM) are models that utilize the explicit or implicit network structure of the sub-problems as defined by the categorical features called strata (\eg, age, region, time, forecast horizon, etc.), and draw upon data from neighboring strata to enhance the parameter learning of each sub-problem.  They %among different strata of  
have been widely applied in machine learning and signal processing problems, including but not limited to time series forecasting, representation learning, graph clustering, max-margin classification, and general few-shot learning. Nevertheless, existing works on LRSM have either assumed a known graph or are restricted to specific applications. %, which limit the applicability of such models. 
In this paper, we start by showing the importance and sensitivity of graph weights in LRSM, and provably show that the sensitivity can be arbitrarily large when the parameter scales and sample sizes are heavily imbalanced across nodes. We then propose a generic approach to jointly learn the graph while fitting the model parameters by solving a single optimization problem. We interpret the proposed formulation from both a graph connectivity viewpoint and an end-to-end Bayesian perspective, and propose an efficient algorithm to solve the problem.  Convergence guarantees of the proposed optimization algorithm is also provided despite the lack of global strongly smoothness of the Laplacian regularization term typically required in the existing literature, which may be of independent interest. Finally, we illustrate the efficiency of our approach compared to existing methods by various real-world numerical examples. % that our approach    %XXX. Mention numerical experiments.  % with convergence guarantees. 
\end{abstract}

\section{Introduction}
\paragraph{Stratified models.} In this paper, we consider fitting stratified models for a group of identified categorical features, namely simultaneously fitting multiple (mutually related) models for each category with a shared base model. These models are ubiquitous in practice, including but not limited to personalized click-through-rate prediction, multi-region weather forecasts and multi-horizon asset pricing, where the categories are user types, regions and forecast horizons, respectively. 

More precisely, 
we are given data records of the form $(z,x,y) \in 
\{1, \ldots, K\} \times \mathcal X \times \mathcal Y$,
where $z$ is the identified categorical feature (which we call strata and takes $K$ possible values) over which we stratify, $x \in \mathcal X$ are features,
and $y \in \mathcal Y$ are outputs or outcomes, and
$\mathcal X$ and $\mathcal Y$ can consist of numerical, categorical, or any other data types. Hence in particular, both classification and regression problems are incorporated. In some cases, we do not have $x$  and thus the data records have the form $(z,y)$.

% XXX. Base model, general regularization and general stratified models. 
%In some cases, we do not have $x$; in this case, the data records have the form $(z,y)$. 

\paragraph{Laplacian regularized stratified models.}
A Laplacian regularized stratified model (LRSM) is the solution to~\cite{tuck2019distributed}
\BEQ\label{eq:lap_strat}
\begin{array}{ll}
\mbox{minimize}_{\Theta} & 
\sum_{k=1}^K(l_k(\theta_k)+r(\theta_k)) + \mathcal L(\theta_1, \ldots, \theta_K),
\end{array}
\EEQ
with optimization variable $\Theta = [\theta_1 \cdots \theta_K] \in \reals^{n \times K}$.
Here $l_k(\theta)=\sum_{i:z_i=k}l(\theta,x_i,y_i)$ is the $k$th local loss function;
$r: \reals^n \to \reals \cup \{\infty\}$ is the regularization of each model
(where infinite values encode constraints on allowable model parameters). 

%\paragraph{Laplacian regularization.}
The last term in \eqref{eq:lap_strat} is 
Laplacian regularization, which encourages
neighboring values of $z$, under some weighted graph, to have similar
parameters.
It is characterized by $W \in \symm^K$, a symmetric weight matrix with 
%nonnegative entries. 
zero diagonal entries and nonnegative off-diagonal entries.
The Laplacian regularization has the form
\[
\mathcal L(\Theta) = \mathcal L(\theta_1, \ldots, \theta_K)
=\frac{1}{2} \sum_{i=1}^K\sum_{i < j} W_{ij} \|\theta_i - \theta_j\|^2,
\]
where the norm is the Euclidean or $\ell_2$ norm when $\theta_z$ is a
vector, and the Frobenius norm when $\theta_z$ is a matrix.
We think of $W$ as defining a weighted similarity graph, with edges associated
with positive entries of $W$, and with edge weight $W_{ij}$.
The larger $W_{ij}$ is, the more encouragement
$\theta_i$ and $\theta_j$ have to be close to one another.
We can also write the Laplacian regularization as the positive semidefinite quadratic form
\[
\mathcal L(\Theta) = (1/2) \mathbf{Tr}(\Theta L \Theta^T) = (1/2)\mathbf{Tr}(\Theta^T \Theta L),
\]
and $L = \mathcal{G}(W) \in \reals^{K \times K}$ is the (weighted) Laplacian matrix associated with the weighted graph, where $\mathcal{G}:\reals^{K\times K}\rightarrow\reals^{K\times K}$ 
is a linear mapping defined as follows:
\begin{equation*}
\mathcal{G}(W)_{ij}=\left\{
\begin{array}{ll}
-W_{ij} & i\neq j,\\
\sum\nolimits_{k\neq i}W_{ik} & i=j.
\end{array}
\right.
\end{equation*}

In the literature, it is generally assumed that the graph 
edge weights $W_{ij}$ are non-negative and known \textit{a priori}, 
with larger values indicating stronger closeness between neighboring model parameters. 
However, these weights are typically decided by heuristics, which can be inaccurate and 
arbitrary (especially in terms of the scales) even when a relatively accurate closeness 
relationship between different sub-models is known in advance.  In this paper, we aim to fit the stratified models while learning graph topology behind our data at the same time. We refer to the resulting novel method, which draws upon insights from graph theory and probabilistic/statistical modeling, %of a probabilistic model, 
as the \textit{Joint Laplacian stratified model}. %Our novel method, which we refer to as the Joint Laplacian stratified model, incorporates insights of a probabilistic model to perform joint learning. %and thus leads to the joint Laplacian stratified models.

Throughout the paper, we assume that both $l_k$ and $r$ are closed and proper functions. In addition, $l_k$ is assumed to be differentiable, and $r$ is assumed to have closed-form and easy-to-evaluate proximal operators (\eg, $\ell_1$, ridge and elastic regularization).

\paragraph{Contributions.} Our contributions are three-fold. Firstly, we conduct both theoretical and numerical analysis of the sensitivity of graph weights in LRSM under zero/few-shot settings, which demonstrate the inefficiency of traditional methods that set handmade edge weights, especially when facing severe imbalance of sample sizes and parameter scales. Secondly, we propose a new framework to simultaneously fit LRSM and learn the unknown graph by solving a joint optimization problem over $\Theta$ and $W$ when the graph structure is unavailable. We also give some interpretations of our design from the lens of graph theory and probabilistic models. Moreover, since the resulting optimization problem is generally non-convex and non-smooth, we adopt an accelerated proximal gradient method to solve it and provide a convergence guarantee despite the lack of global strongly smoothness of the Laplacian regularization term, which is typically required in the existing literature. Last but not least, we also demonstrate the efficiency of our method over existing ones with various real-world data examples.  %several numerical experiments and verify its efficiency.

\section{Motivation: Sensitivity of graph weights}\label{sec:motivation} %Graph Laplacian}\label{sec:motivation}
% XXX. This part motivates the importance of gradient-based end-to-end optimization of the weights, instead of grid search and cross-validation as a small change leads to large errors (and so a very dense grid is needed, making it computationally intensive). 

% XXX. Mention few-shot/zero-shot learning, data-fusion and the high-level idea of parameter scale imbalance and sample number imbalance. Check wechat (with Ziheng and self-chat) for more related terminologies and ideas of presenting the results. 

In this section, we aim to show that when parameter scale and sample size are heavily imbalanced across nodes, LRSM suffer from the sensitivity of edge weights even if the graph structure is known. Here sensitivity means that a tiny change to edge weights leads to large errors. With this concern, gradient-based end-to-end optimization of the weights is considerably more efficient than traditional methods like grid search and cross-validation.

Consider the case that a majority of strata have massive data samples and thus can be estimated accurately even without Laplacian regularization. However, a few strata can only get access to few or even no data samples, which is a common setting in few-shot or zero-shot learning. In this way, to fit model parameters at these nodes, a well-designed graph Laplacian is essential for transferring data information. But if the scale of parameters does not vary smoothly on the graph, then a slight perturbation on edge weights will lead to drastic changes in estimated parameters. 

Informally, when a node denoted by $k_0$ has few or no samples, then the output of Laplacian regularized stratified model \eqref{eq:lap_strat} will be
\begin{equation*}
    \widehat{\theta}_{k_0} \approx \frac{\sum_{j\neq k_0} W_{jk_0} \widehat{\theta}_j}{\sum_{j\neq k_0} W_{jk_0}}\approx \frac{\sum_{j\neq k_0} W_{jk_0} \theta_j^{*}}{\sum_{j\neq k_0} W_{jk_0}}
\end{equation*}
provided that all other sub-models are well-fitted. Hence for any $i\neq k_0$, the derivative of $\widehat{\theta}_{k_0}$ with respect to $w_{ik_0}$ is approximately
\begin{equation*}
    \frac{d\widehat{\theta}_{k_0}}{dW_{ik_0}}\approx \frac{\sum_{j\neq k_0, j} W_{jk_0}(\theta_i^*-\theta_j^*)}{(\sum_{j\neq k_0} W_{jk_0})^2}.
\end{equation*}
If $||\theta_{i}^*||$ is very large then $||\frac{d\widehat{\theta}_{k_0}}{dW_{ik_0}}||$ would be large as well and thus $\widehat{\theta}_{k_0}$ is sensitive to the edge weights. We state this point of view more rigorously under zero-shot setting. The proof is differed in Appendix \ref{appendix_motivation}.

\begin{theorem}\label{thm:sensitivity}
    Consider a stratified model \eqref{eq:lap_strat} with zero local regularization, where a strata denoted by $k_0$ has no data sample. Denote the edge weight matrix as $W$. Consider a small perturbation to the graph Laplacian $L=\mathcal{G}(W)$. In particular, for any arbitrarily fixed $i\neq k_0$, we replace $W_{ik_0}>0$ with $\widetilde{W}_{ik_0}:=W_{ik_0}+\epsilon$ for some $\epsilon>0$, while other edges remain the same and denote the perturbed Laplacian by $\widetilde{L}$. Suppose that the solution of \eqref{eq:lap_strat} given $L$ and $\widetilde{L}$ are $\Theta$ and $\widetilde{\Theta}$, respectively, and the true parameter is $\Theta^*$. If for some $\delta, \delta'>0$, both solutions are $(\delta, \delta')$-accurate, in the sense that $\max_{j\neq k_0} \frac{||\theta_j-\theta_j^*||}{||\theta_j^*||} \leq \delta,\ \frac{||\theta_{k_0}-\theta_{k_0}^*||}{||\theta_{k_0}^*||} \leq \delta',\ \max_{j\neq k_0} \frac{||\widetilde{\theta}_j-\theta_j^*||}{||\theta_j^*||} \leq \delta,\ \frac{||\widetilde{\theta}_{k_0}-\theta_{k_0}^*||}{||\theta_{k_0}^*||} \leq \delta' $,  then 
    \begin{equation}\label{eq:sensitivity}
         \frac{\epsilon}{W_{ik_0}} \leq \frac{2(\delta+\delta'+\frac{\delta W_{ik_0}}{S}A_i)}{\left(1-\frac{(1+2\delta)W_{ik_0}}{S}-\delta\right)A_i-1+\frac{W_{ik_0}}{S}-2(\delta+\delta')} \frac{S}{W_{ik_0}},
    \end{equation}
    provided that the denominator is positive. Here $A_i:= \frac{||\theta_i^*||}{\max_{j\neq i} ||\theta_j^*||}, S:=\sum_{j\neq k_0}W_{jk_0}$.
\end{theorem}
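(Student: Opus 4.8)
The plan is to exploit the zero-shot structure to get closed forms for $\theta_{k_0}$ and $\widetilde\theta_{k_0}$, turn the difference of the two optimality conditions into a single identity, and then trap $\epsilon$ between an accuracy-driven upper bound and a scale-gap-driven lower bound. First, since stratum $k_0$ carries no data and $r\equiv 0$, the only part of the objective in \eqref{eq:lap_strat} depending on $\theta_{k_0}$ is $\mathcal L(\Theta)$, which is quadratic and strictly convex in that block (its Hessian there is $S$ times the identity, and $S=\sum_{j\ne k_0}W_{jk_0}>0$ since $W_{ik_0}>0$). The block-optimality condition $\nabla_{\theta_{k_0}}\mathcal L(\Theta)=0$, together with $L_{k_0k_0}=S$ and $L_{k_0j}=-W_{jk_0}$, gives $\theta_{k_0}=\frac{1}{S}\sum_{j\ne k_0}W_{jk_0}\theta_j$; running the same computation with $W_{ik_0}$ replaced by $W_{ik_0}+\epsilon$ (so $S$ becomes $S+\epsilon$) gives $\widetilde\theta_{k_0}=\frac{1}{S+\epsilon}\bigl((W_{ik_0}+\epsilon)\widetilde\theta_i+\sum_{j\ne k_0,i}W_{jk_0}\widetilde\theta_j\bigr)$. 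Clearing denominators and subtracting yields the identity
\[
\epsilon\,(\widetilde\theta_i-\widetilde\theta_{k_0})=S\,(\widetilde\theta_{k_0}-\theta_{k_0})-\sum_{j\ne k_0}W_{jk_0}\,(\widetilde\theta_j-\theta_j),
\]
which is the engine of the proof.

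Taking norms of the right-hand side and invoking $(\delta,\delta')$-accuracy through the triangle inequality — $\|\widetilde\theta_j-\theta_j\|\le 2\delta\|\theta_j^*\|$ for $j\ne k_0$ and $\|\widetilde\theta_{k_0}-\theta_{k_0}\|\le 2\delta'\|\theta_{k_0}^*\|$ — then writing $M:=\max_{j\ne i}\|\theta_j^*\|$ (so that $\|\theta_{k_0}^*\|\le M$, $\|\theta_j^*\|\le M$ for every $j\ne i$, and $\|\theta_i^*\|=A_iM$) and using $\sum_{j\ne k_0,i}W_{jk_0}\le S$, I would obtain $\epsilon\,\|\widetilde\theta_i-\widetilde\theta_{k_0}\|\le 2M\bigl(S(\delta+\delta')+\delta W_{ik_0}A_i\bigr)$, which is the numerator of \eqref{eq:sensitivity} up to the factor $S$. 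The matching lower bound on $\|\widetilde\theta_i-\widetilde\theta_{k_0}\|$ is the delicate step and it must come out $\epsilon$-free. I would start from $\|\widetilde\theta_i-\widetilde\theta_{k_0}\|\ge\|\widetilde\theta_i\|-\|\widetilde\theta_{k_0}\|\ge(1-\delta)\|\theta_i^*\|-\|\widetilde\theta_{k_0}\|$, and — crucially — bound $\|\widetilde\theta_{k_0}\|$ not through its own perturbed fixed point (which would reintroduce $\epsilon$) but by passing through the unperturbed solution: $\|\widetilde\theta_{k_0}\|\le\|\theta_{k_0}\|+2\delta'\|\theta_{k_0}^*\|$ and $\|\theta_{k_0}\|\le\frac{1}{S}\sum_{j\ne k_0}W_{jk_0}\|\theta_j\|\le\frac{1+\delta}{S}\bigl(W_{ik_0}\|\theta_i^*\|+(S-W_{ik_0})M\bigr)$. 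Substituting $\|\theta_i^*\|=A_iM$ and collecting terms (mildly loosening the constants) gives $\|\widetilde\theta_i-\widetilde\theta_{k_0}\|\ge M\,D$ with $D=\bigl(1-\frac{(1+2\delta)W_{ik_0}}{S}-\delta\bigr)A_i-1+\frac{W_{ik_0}}{S}-2(\delta+\delta')$, exactly the denominator of \eqref{eq:sensitivity}.

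Chaining the two bounds gives $\epsilon\,M\,D\le 2M\bigl(S(\delta+\delta')+\delta W_{ik_0}A_i\bigr)$; under the stated hypothesis $D>0$, dividing by $M$, then by $D$, then by $W_{ik_0}$ produces \eqref{eq:sensitivity}. The main obstacle is the lower bound: one must resist expanding $\widetilde\theta_{k_0}$ through the perturbed Laplacian (doing so leaves the bound implicit in $\epsilon$), instead folding the perturbation into the accuracy error, and then track the chain of triangle inequalities carefully so that the surviving constants — the $1+2\delta$ multiplying $W_{ik_0}/S$, the $-1+W_{ik_0}/S$, and the $2(\delta+\delta')$ — line up with the stated denominator rather than some coarser expression. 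The fixed-point identity and the final algebra are routine once this is set up.
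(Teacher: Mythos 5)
Your proposal is correct, and it reaches \eqref{eq:sensitivity} with exactly the stated constants. The core strategy coincides with the paper's: both exploit the zero-shot stationarity conditions $\theta_{k_0}=\frac{1}{S}\sum_{j\neq k_0}W_{jk_0}\theta_j$ and $\widetilde\theta_{k_0}=\frac{1}{S+\epsilon}\bigl((W_{ik_0}+\epsilon)\widetilde\theta_i+\sum_{j\neq k_0,i}W_{jk_0}\widetilde\theta_j\bigr)$, subtract them, and trap the result between triangle-inequality bounds driven by the $(\delta,\delta')$-accuracy. Where you differ is the bookkeeping. The paper keeps the $i$-th terms on the left and sandwiches $\bigl\|\tfrac{W_{ik_0}}{S}\theta_i-\tfrac{W_{ik_0}+\epsilon}{S+\epsilon}\widetilde\theta_i\bigr\|$; this leaves $\tfrac{\epsilon}{S+\epsilon}$ on both sides of the combined inequality, and \eqref{eq:sensitivity} emerges only after solving that inequality for $\epsilon$, which is where the $1+2\delta$ coefficient and the $-2(\delta+\delta')$ term in the denominator are generated. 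You instead clear denominators first and isolate $\epsilon(\widetilde\theta_i-\widetilde\theta_{k_0})=S(\widetilde\theta_{k_0}-\theta_{k_0})-\sum_{j\neq k_0}W_{jk_0}(\widetilde\theta_j-\theta_j)$, so $\epsilon$ appears linearly; the numerator bound is then $\epsilon$-free automatically, and your device of bounding $\|\widetilde\theta_{k_0}\|$ through the \emph{unperturbed} fixed point rather than the perturbed one keeps the denominator bound $\epsilon$-free as well, so no fractional inequality has to be solved. Your pre-loosening denominator $\bigl(1-\delta-\tfrac{(1+\delta)W_{ik_0}}{S}\bigr)A_i-(1+\delta)\bigl(1-\tfrac{W_{ik_0}}{S}\bigr)-2\delta'$ exceeds the stated one by $\tfrac{\delta W_{ik_0}}{S}A_i+\delta+\tfrac{\delta W_{ik_0}}{S}>0$, so the loosening goes in the legitimate direction and, under the assumed positivity of the denominator, the final divisions are valid; your route is in fact marginally tighter before you deliberately weaken it to match the statement. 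The only point to spell out in a full write-up is that $M=\max_{j\neq i}\|\theta_j^*\|>0$ (implicit here and in the paper, since the accuracy hypotheses divide by $\|\theta_j^*\|$), so dividing by $M$ is allowed.
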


\begin{remark}
The above theorem shows that LRSM are sensitive to edge weights if parameter scale and sample size are highly imbalanced. To simplify \eqref{eq:sensitivity}, let $\frac{W_{ik_0}}{S}$ be a positive constant less than $1$ and let $\delta$ close to 0. Then to ensure that both solutions are robust in the way described above, $\frac{\epsilon}{W_{ik_0}}\leq \frac{2\delta'}{(1-\frac{W_{ik_0}}{S})(A_i-1)-2\delta'}\frac{S}{W_{ik_0}}$. If there exists some stratum $i$ such that $||\theta_i^*||$ is relatively large and thus $A_i$ is large, then $\frac{\epsilon}{W_{ik_0}}\lesssim \frac{\delta'}{A_i}$. In another word, to obtain a robust solution in such case, we need a highly accurate Laplacian $L$ and thus traditional methods such as grid search with cross validation is too expensive since a very dense grid is necessary.
\end{remark}

% XXX. Comment that for few-shot learning of $k_0$ node, similar results hold?

We give a synthetic example to further illustrate our idea. Consider least square linear regression model. The graph has $K$ nodes and on each node $v$, there are $n_k$ (possibly $n_k = 0$) samples $(x_{ki}, y_{ki})\in \reals^n \times \reals$, for $i=1,\cdots,n_k$. The local loss function is square error $l_k(\theta) = \sum_{i=1}^{n_k}(y_{ki}-x_{ki}^T\theta)^2$ and local regularization is sum of squares.

We generate true parameters $\Theta^*$ from multi-variate Gaussian distribution $\mathcal{N}(\mu, \Sigma)$ for each dimension. The inputs $x_{ki} \stackrel{\iid}{\sim} \mathcal{N}(0, I_n)$ and observations are $y_{ki} = x_{ki}^T\theta_k^* + \varepsilon_{ki}$ with $\varepsilon_{ki}\stackrel{\iid}{\sim}\mathcal{N}(0, \sigma^2)$.

We construct a star graph edge weight matrix $W$ with center node denoted by $k_0$ and let $\Sigma = \mathcal{G}(W)^{\dagger} + \delta I_n$. Here $\mathcal{G}(W)^{\dagger}$ is the pseudo inverse of Laplacian $\mathcal{G}(W)$ and $\delta>0$ is small to make $\Sigma$ strictly positive definite. The sample size of the center stratum is small while the rest strata have access to a large number of samples. We generate data as described above, but set the scale of true parameters highly imbalanced intentionally. The edge weights $W^*$ of the optimal star graph are constructed by solving the equation $(\sum_{k\neq k_0}w_{k})\theta_{k_0}^* = \sum_{k\neq k_0}w_{k}\theta_k^*$. Then we gradually add perturbation to $W^*$ and fit the model with perturbed Laplacian $\widetilde{W}=W^*+\epsilon W_1$, where $\widetilde{W}$ is still the edge weight matrix of a star graph. In particular, we let $K=30,\ n=20$. The sample size of all the non-center strata is $100$. $W_1$ is an edge weight matrix of a star graph which is sparse and has $\text{Unif}([0,1])$ distributed entries. Then we re-scale $W_1$ so that it has the same Frobenius norm as $W^*$ does.

We examine two groups that exhibit varying degrees of parameter scale imbalance, as illustrated in Figure \ref{fig:sensitivity norm}. We run 50 random trials of $W_1$ for both groups. Figure \ref{fig:sensitivity star} shows 
the normalized distance between estimated parameter and true parameter at center stratum, \ie, $||\widehat{\theta}_{k_0} - \theta_{k_0}^*|| / ||\theta_{k_0}^*||$ against the perturbation $\epsilon$. The solid line is the mean value and the shaded area is $90\%$ confidence interval. Note that the scale of $W_1$ is the same as that of $W^*$ and thus the curve is approximately a straight line when $\epsilon$ is sufficiently small (intuitively, linearized by first-order Taylor expansions). Although it is only a tiny perturbation, we observe a significant variation in performance, particularly as the parameter scales display more imbalance. The comparison between the left and right subplots in Figures \ref{fig:sensitivity star} and \ref{fig:sensitivity norm} further demonstrates that as the imbalance of parameter scale grows, the sensitivity increases. 

Additionally we increase the sample size of the center strata and show the slope of the normalized distance with respect to perturbation under different sample sizes in Figure \ref{fig:sensitivity slope}. This illustrates that our ideas also hold for few-shot learning.
\begin{figure}[h]
    \centering
    \begin{minipage}[t]{0.45\textwidth}
        \centering
        \includegraphics[scale=0.45]{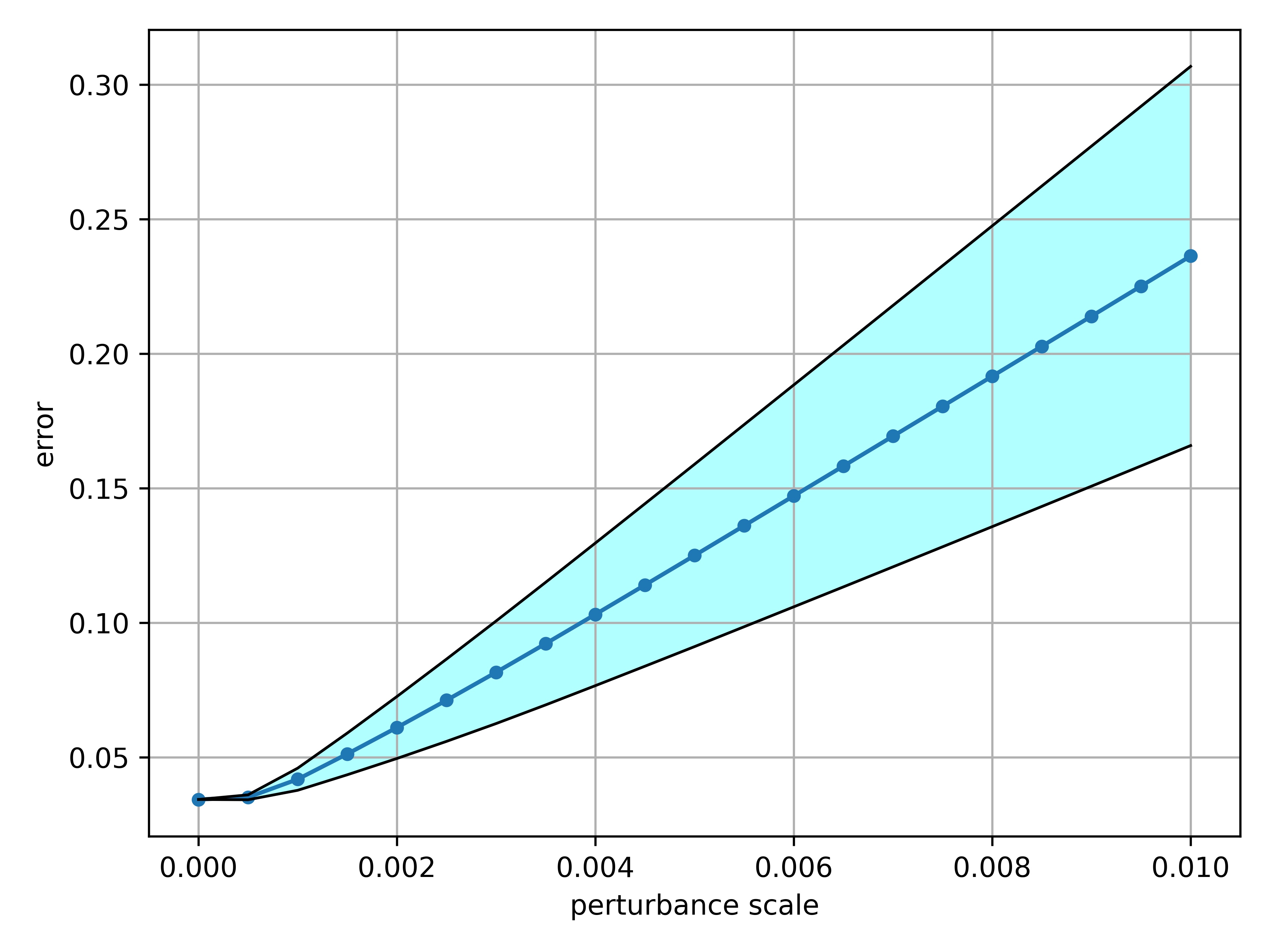}
    \end{minipage}
    \begin{minipage}[t]{0.45\textwidth}
        \centering
        \includegraphics[scale=0.45]{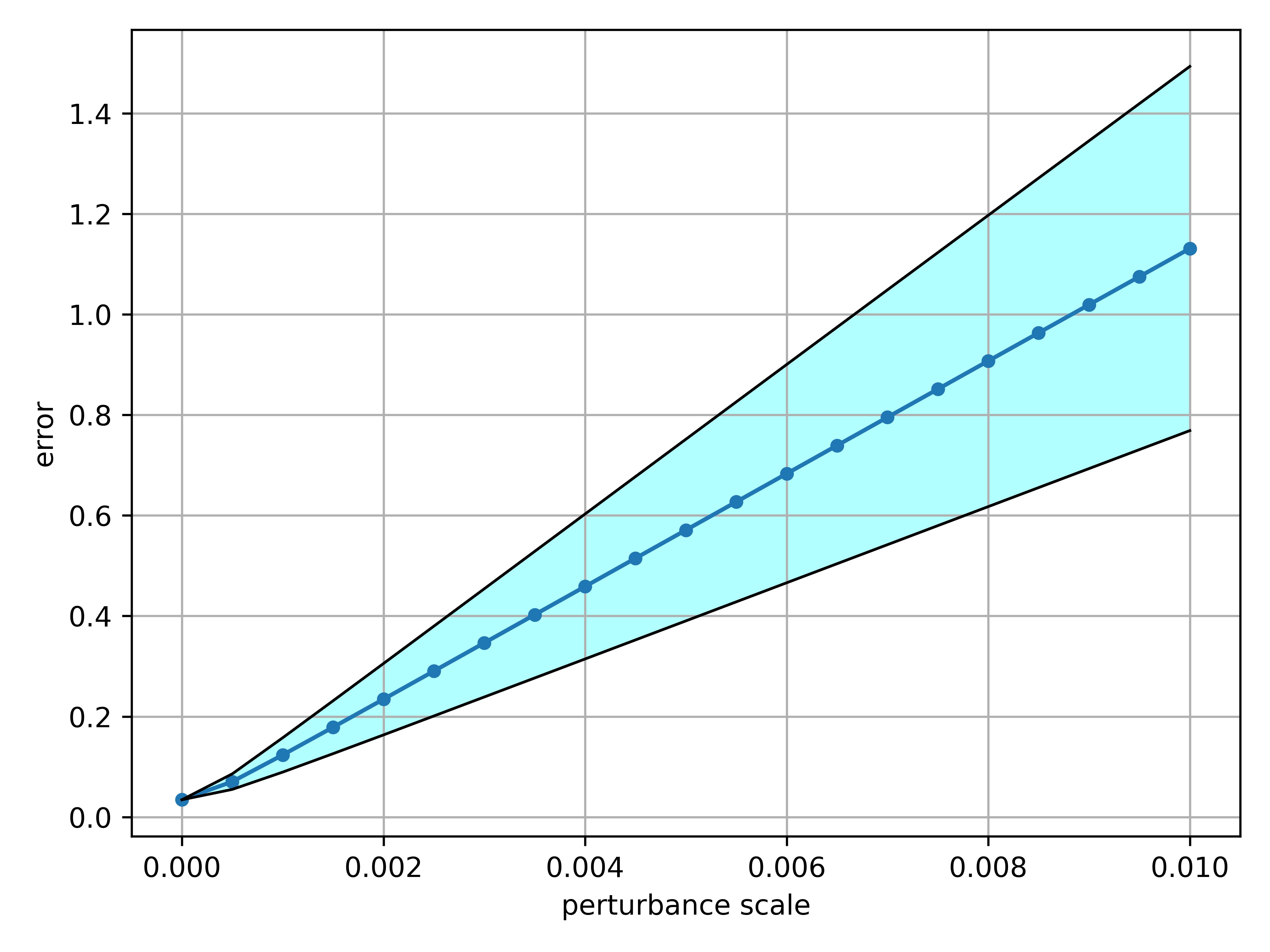}
    \end{minipage}
    \caption{Sensitivity of Laplacian Matrix}
    \label{fig:sensitivity star}
\end{figure}

\begin{figure}[h]
    \centering
    \begin{minipage}[t]{0.45\textwidth}
        \centering
        \includegraphics[scale=0.45]{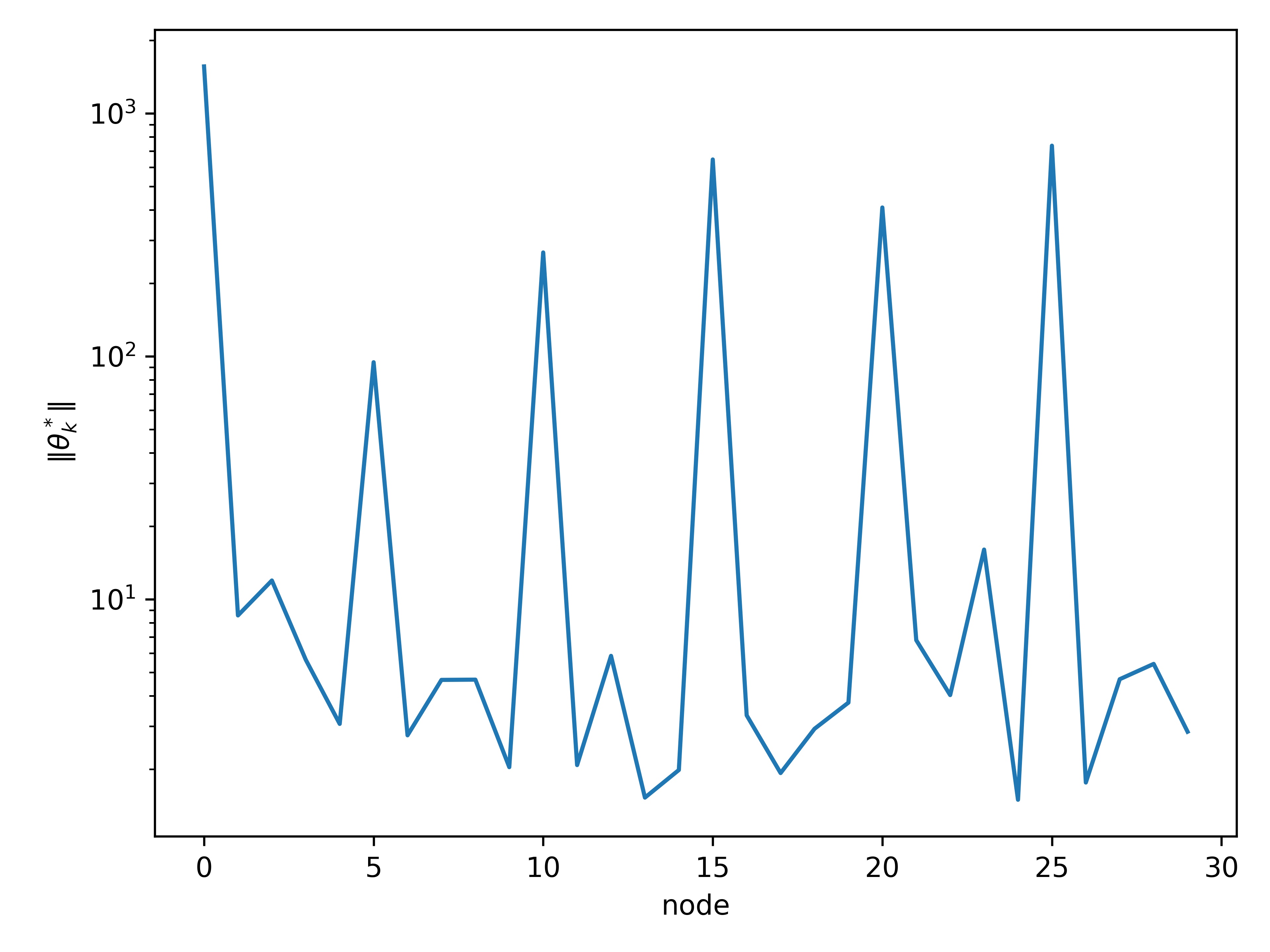}
    \end{minipage}
    \begin{minipage}[t]{0.45\textwidth}
        \centering
        \includegraphics[scale=0.45]{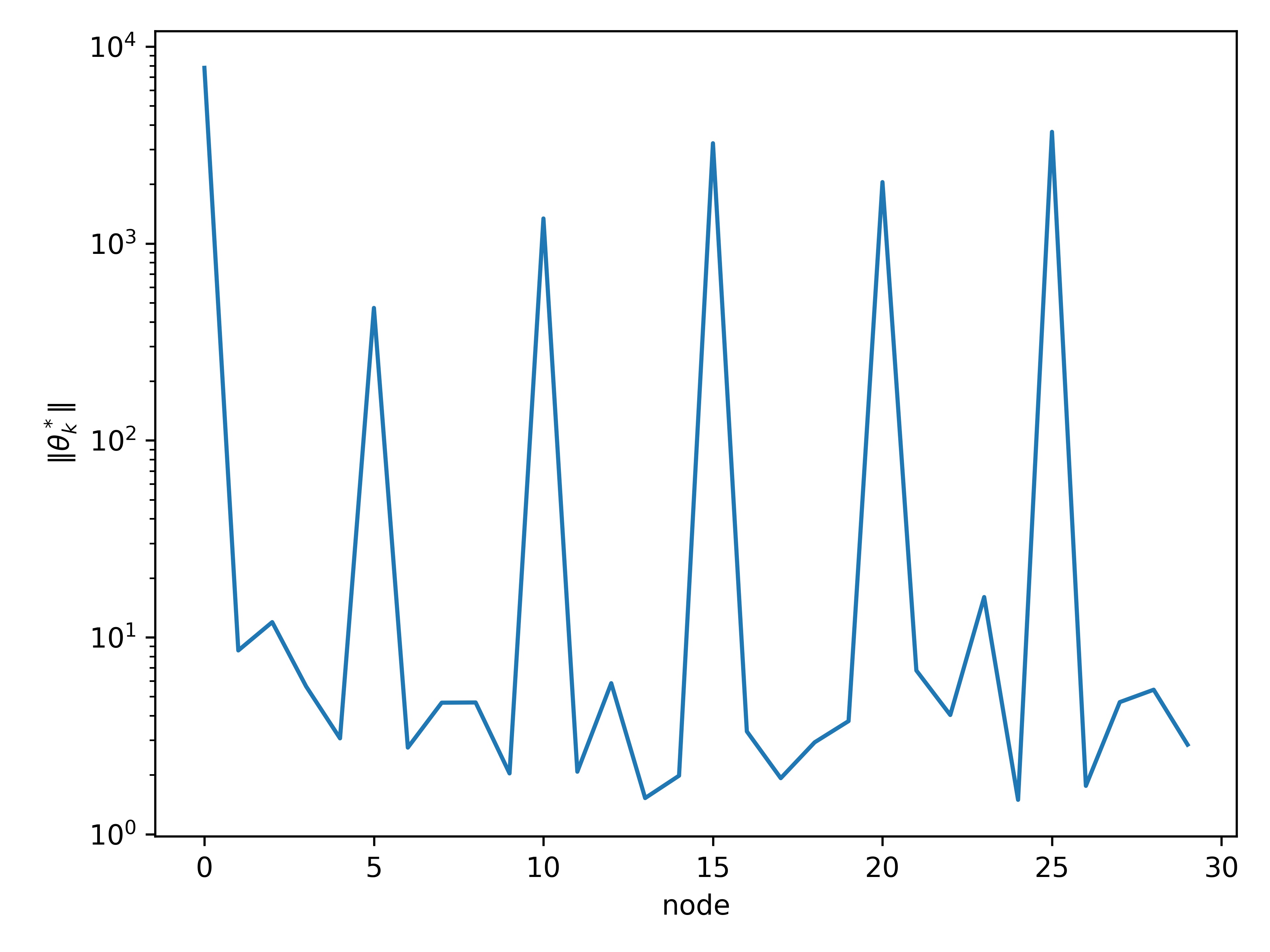}    
    \end{minipage}
    \caption{Norm of true parameters (the last one is center stratum)}
    \label{fig:sensitivity norm}
\end{figure}

\begin{figure}[h]
    \centering
     \begin{minipage}[t]{0.45\textwidth}
        \centering
        \includegraphics[scale=0.5]{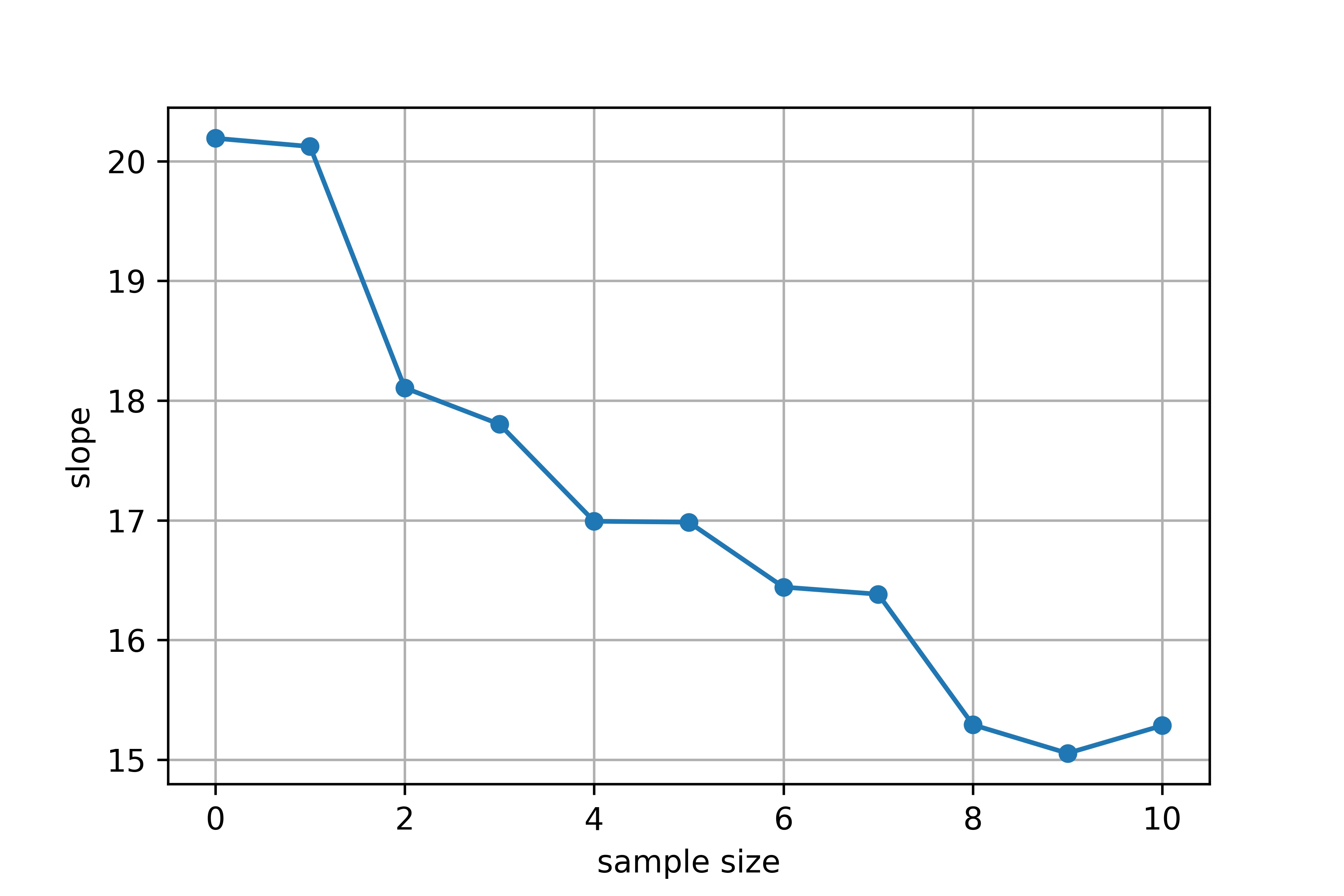}
    \end{minipage}
    \begin{minipage}[t]{0.45\textwidth}
        \centering
        \includegraphics[scale=0.5]{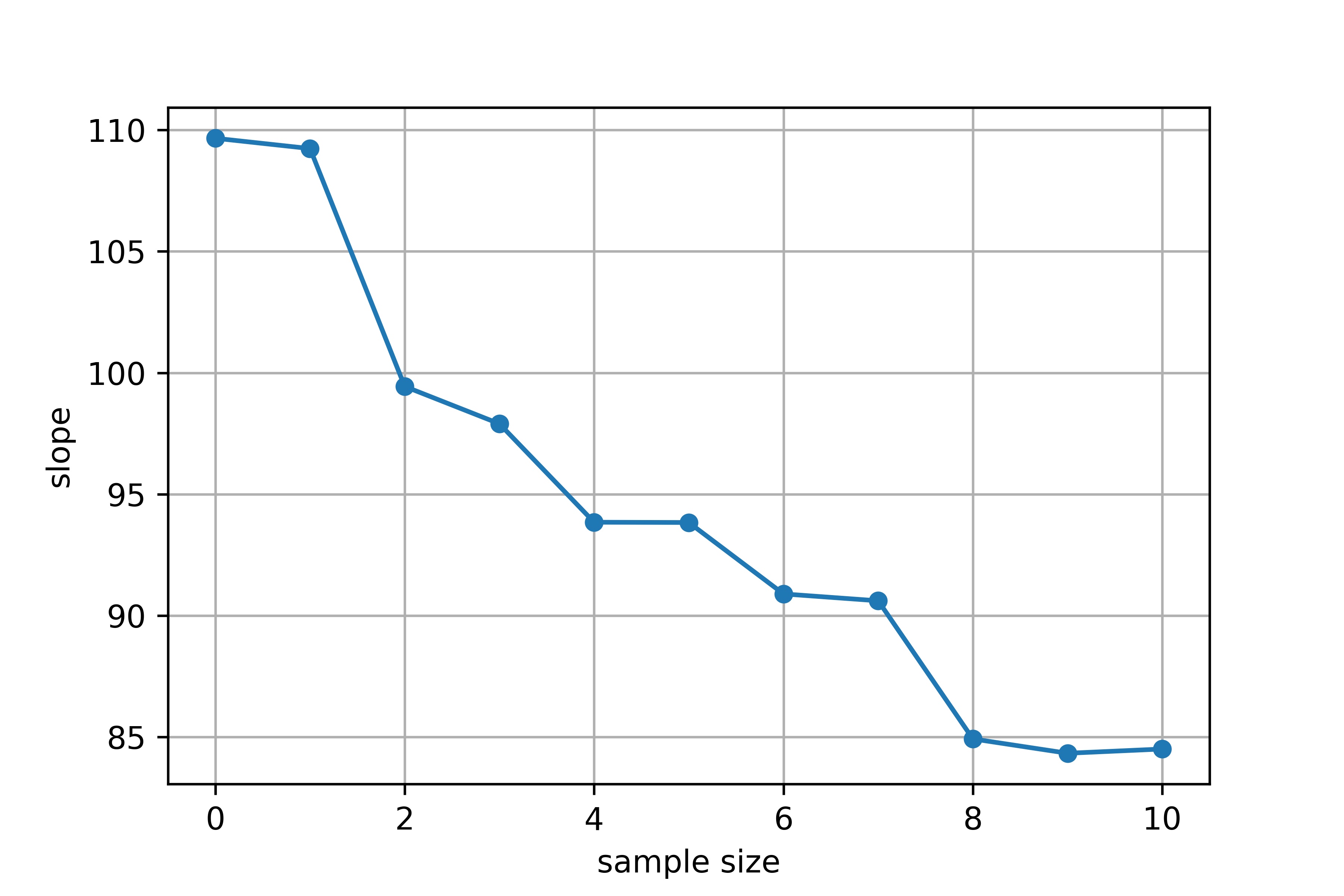}    
    \end{minipage}
    \caption{Slope with different sample sizes}
    \label{fig:sensitivity slope}
\end{figure}

\section{Our method}\label{sec: method}
In order to tackle unknown graph and edge weights, one immediate solution is to jointly minimize $\Theta$ and $W$, with $W\geq 0$ in \eqref{eq:lap_strat}. However, directly doing so will lead to the meaningless solution $W=0$. To address this issue, we need some additional criteria on $W$ that counters such an 
effect of trivial minimization. We first state our proposed new framework as a heuristic and then validate it from both graph-theoretic and statistical perspectives. %interpretations.

\subsection{Joint Laplacian stratified model}
In this paper, we propose to add two additional terms to \eqref{eq:lap_strat} 
(with $\lambda_1,\,\lambda_2\geq 0$):
\BEQ\label{joint_lap_strat_gen}
\begin{array}{ll}
\text{minimize}_{W,\,\Theta}&\sum\limits_{k=1}^K(l_k(\theta_k)+r(\theta_k))
+\frac{1}{2}\sum\limits_{i=1}^K \sum\limits_{i < j} W_{ij}\|\theta_i-\theta_j\|_2^2+\lambda_1 R_1(W)
+\lambda_2 R_2(W),\\
\text{subject to}& W\geq 0,\,\textbf{diag}(W)=0,\,W\in\symm^K.
%&\quad+\lambda()).
\end{array}
\EEQ
Here the first term
\BEQ\label{logdet-reg}
R_1(W)=-\log\det(\mu I+\mathcal{G}(W))
\EEQ
is the negative log determinant of the regularized Laplacian matrix $\mathcal{G}(W)$, which 
encourages accessibility, connectivity and robustness in the graph. Alternatively, it can be easily derived from a probabilistic model.  See \S \ref{interpretation} for more details. Besides, it prevents $W$ from trivially degrading to zero and thus enlarges the magnitude.
Here $\mu>0$ is a positive constant.

The second term (with $\eta\in[0,1]$)
\BEQ\label{prox-reg}
R_2(W)=\dfrac{1-\eta}{2}\|W-W_0\|_F^2+\eta \|W\|_1 %P(W) %\eta P(W)
\EEQ
is a elastic net style regularization term on $W$, consisting of a proximal term enforcing its closeness to 
some prior guess $W_0\in\reals_+^{K\times K}$ and an auxiliary penalization term  
encouraging edge sparsity in the similarity graph.
Here $\|\cdot\|_1$ is the elementwise $\ell_1$ norm. The idea of using prior guess $W_0$ is also adopted in \cite{Zhao2021JointAG}. Throughout this paper we suppose that $W_0$ is diagonal-free.

\paragraph{Example Choices of $W_0$.} We can simply set $W_0$ to be zero matrix or based on a combination of prior info, hand-made design and cross validation as is done in \cite{tuck2019distributed}, which are both very effective in practice as we will see in our numerical experiments below. Additionally, below we describe three slightly more advanced ways to obtain a non-trivial (and potentially better) initial guess $W_0$. For simplicity and since the naive approaches already work well in practice as is shown in our experiments below, these more advanced methods of choosing $W_0$ are not numerically tested in our experiments and we leave the investigation of their strengths and weaknesses for future work. 

The first one relies on certain metrics over the stratification 
variable $z$. For example, when each value of $z\in\{1,\dots,K\}$ is naturally associated 
with a representation 
%$f_z\in \reals^p$ for some positive integer $p$, 
$f_z\in\mathcal{M}$ for some metric space $\mathcal{M}$ with metric $d$, 
one can set $[W_0]_{ij}=w(f_i,f_j)$, where
%$w:\reals^p\times\reals^p\rightarrow\reals_+$ 
$w:\mathcal{M}\times\mathcal{M}\rightarrow\reals_+$ 
is some ``kernel'' function, \ie, $w(f_i,f_j)$ is 
larger for when $f_i$ and $f_j$ are closer under the metric $d$.
%(under certain metrics in $\reals^p$). 
The following are common examples of $w$:
\begin{itemize}
\item Exponential kernel: $w(f_i,f_j)=\exp(-\tau d(f_i,f_j))$, with $\tau>0$ being a positive 
constant. 
\item $k$-nearest neighbor kernel \cite{ziko2020laplacian}: 
$w(f_i,f_j)=1$ if $f_j$ is within the $k$-nearest neighborhood (in terms of $d$) 
of $f_i$, and $w(f_i,f_j)=0$ otherwise.
\end{itemize}
See also \cite{tuck2021portfolio} for designing $w$ in the specific setting 
of portfolio construction.

The second choice of $W_0$ is to first solve all the sub-models 
independently and obtain individually learned parameters 
$\tilde{\theta}_i=\argmin_{\theta}\,l_i(\theta)+r(\theta)$ for each sub-model $i=1,\dots,K$, and 
then define $[W_0]_{ij}=w(\tilde{\theta}_i,\tilde{\theta}_j)$, where 
$w:\reals^n\times\reals^n\rightarrow\reals_+$ is again some ``kernel'' function. The examples of
$w$ above also applies here, with the metric $d(\tilde{\theta}_i,\tilde{\theta}_j)=
\|\tilde{\theta}_i-\tilde{\theta}_j\|_2$. This approach can also be seen as a generic approach
for finding representations $f_z$ (when they are not directly available) for the stratification 
variable $z$. 
Note that one potential limitation of this method is that when some stratification value $z$ 
has very few or even no data points, the learned parameter $\tilde{\theta}_z$ 
could be meaningless. 

The third approach is based on the idea of auto-tuning. In this approach, a validation set of 
data points $\{z_i^{\rm val},x_i^{\rm val},y_i^{\rm val}\}_{i=1}^{N_{\rm val}}$ is held out to 
construct the following validation loss function:
\BEQ\label{val_loss}
L^{\rm val}(\Theta)=\sum\nolimits_{k=1}^Kl_k^{\rm val}(\theta_k),
\EEQ
where $l_k^{\rm val}(\theta)=\sum_{i:z_i^{\rm val}=k}l(\theta,x_i^{\rm val},y_i^{\rm val})$. 
Let $\mathcal{S}:\reals^{K\times K}\rightarrow\reals^{n\times K}$ be a mapping from the 
graph weights $W$ to a solution $\Theta$ of the training problem \eqref{eq:lap_strat}. Then 
we calculate $W_0$ as the solution to the following problem:
\BEQ\label{W0_val_loss}
\begin{array}{ll}
\text{minimize}_{W} & L^{\rm val}(\mathcal{S}(W)),\\
\text{subject to} & W\geq 0,\,\textbf{diag}(W)=0,\,W\in{\bf S}^K, 
\end{array}
\EEQ
which minimizes the validation loss over all possible choices of nonnegative graph weights. 
Problem \eqref{W0_val_loss} can be minimized by gradient descent methods, where the gradients
are evaluated by differentiating through convex programs \cite{agrawal2019differentiable}. 
Note that a similar method has been proposed 
for least-squares estimations in \cite{barratt2020least}. It is also tempting to directly use the obtained $W_0$ from this approach as the final choice of the graph weights. However, for general (non least-squares) problems, the optimization of \eqref{eq:lap_strat} is already computationally intensive except for small problems, and we have found that it is not working very well in general when we directly use it as the final weights without further modifications. 

% However, the Laplacian term is not DPP and we need to reformulate it by inducing $\frac{K(K-1)}{2}$ new variables $D_{ij}$ with constraints $D_{ij}\geq ||\theta_i-\theta_j||^2$. In this way, solving \eqref{eq:lap_strat} itself by convex optimization is computationally intensive even for small $K$ up to tens, and thus not very computationally efficient in practice without further modifications.

\subsection{Graph theory and statistical interpretations}\label{interpretation}
In this subsection, we provide interpretations of the objective 
\eqref{joint_lap_strat_gen} from the lens of graph theory and statistics. 
We first provide two explanations of the additional terms 
related to the graph weights $W$ in \eqref{joint_lap_strat_gen}, 
and then give an end-to-end interpretation of 
\eqref{joint_lap_strat_gen} as the Maximum a Posteriori estimation of a Bayesian model.

\paragraph{Learning $W$ as spanning forests total weight maximization.} We first explain how
the $\log\det$ term \eqref{logdet-reg} is related to the total weight of spanning forests 
in the weighted graph $G_W$
associated with $W$. Suppose that the graph has $m$ connected components, each containing
$n_i$ $(i=1,\dots,m)$ nodes. Then the zero eigenvalue of the graph Laplacian matrix $\mathcal{G}(W)$ 
has its multiplicity equal to $m$. Suppose that the remaining $K-m$ nonzero eigenvalues are 
$0<\lambda_1\leq \cdots\leq \lambda_{K-m}$. 
%isolated nodes, and the $K-m$ 
%nonzero eigenvalues of $L(W)$ are $0<\lambda_1\leq \cdots\leq \lambda_{K-m}$. 
Then the %well-known 
 weighted matrix tree/forest theorem \cite{chaiken1978matrix, chebotarev2006matrix, 
 mat_tree_lec} states that
 \BEQ\label{weighted_mat_tree}
 \tau(G_W) = \dfrac{1}{\prod_{i=1}^mn_i}\lambda_1\cdots\lambda_{K-m},
 \EEQ
where $\tau(G_W)$ is the total weight of spanning forests in the graph $G_W$, defined as
\[
\tau(G_W)=\prod\nolimits_{i=1}^m\sum\nolimits_{T\in\mathcal{T}_i}w(T).
\]
Here $\mathcal{T}_i$ denotes the set of all spanning trees in component $i$, and 
$w(T)$ denotes the weight of tree $T$, defined as $w(T)=\prod_{(i,j)\in T}W_{ij}$, \ie, 
 the product of weights for all edges in $T$. 
 
 Hence we see that as $\mu\rightarrow0$, we have
 \[
 \begin{split}
 \log\det(\mu I+\mathcal{G}(W))&=\log \left(\mu^m\prod\nolimits_{j=1}^{K-m}(\mu+\lambda_j)\right)\\
 &=m\log \mu+\log\tau(G_W)+\sum_{i=1}^m\log n_i+O(\mu).
 \end{split}
 \]
 This indicates that for a small regularization parameter $\mu\in(0,1)$, 
 minimizing $-\log\det(\mu I+\mathcal{G}(W))$ is 
 approximately achieving three goals simultaneously, namely 
 maximizing the total weight of spanning forests (the $\log \tau(G_W)$ term), minimizing
 the total number of connected components (the $-m\log(1/\mu)$ term) and keeping the partition of
 the number of nodes as even as possible across different connected components 
 (the $\sum_{i=1}^m\log n_i$ term). Here larger abundance of spanning forests, 
 smaller number of connected components and more even partitioning of nodes 
 all encourage better connectivity and accessibility of the graph $G_W$ 
 \cite{chebotarev2006matrix_social}. This 
  leads to a more robust and comprehensive description of the relationship 
 among the $K$ sub-models, which is desired.

\paragraph{Hierarchical Gaussian Markov random field.} A GMRF \cite{Rue2005GaussianMR} based on an undirected graph is defined as 
\begin{equation*}
    p(\Theta\vert W) \propto \exp{(-E(\Theta))},
\end{equation*}
with energy function $E(\Theta) := \frac{1}{2}\sum_{i<j} W_{ij} \Vert \theta_i-\theta_j\Vert^2 + \frac{\mu}{2}\Vert \Theta\Vert^2$. Rewrite it and we can get the normalized density 
\begin{equation*}
    p(\Theta\vert W) = (2\pi)^{-\frac{nK}{2}}\det{(\mu I + \mathcal{G}(W)})^{n/2}\exp{(-\frac{1}{2}Tr(\Theta (\mu I + L) \Theta^T))}.
\end{equation*}
And we can also have a prior distribution on $W$ such that 
\begin{equation*}
    p(W) \propto \exp{(-\frac{c_1}{2}\Vert W\Vert_F^2 - c_2\Vert W\Vert_1)},
\end{equation*}
which is a mixture of Gaussian and Laplacian distribution. Furthermore, consider a hierarchical Bayesian model with likelihood $p(y\vert x, z, \Theta) \propto \exp{(-l(y,x,z;\Theta))}$. Then the negative log-likelihood of the whole model is (up to an additive constant):
\begin{equation*}
    -\log{p} = l(y,x,z;\Theta) + \frac{\mu}{2} \Vert\Theta\Vert^2 + \mathcal{L}(\Theta) - \frac{n}{2}\log{\det (\mu I + \mathcal{G}(W))} + \frac{c_1}{2}\Vert W\Vert_F^2 +c_2\Vert W\Vert_1.
\end{equation*}
By setting $\lambda_1=\frac{n}{2},\ r(\theta)=\frac{\mu}{2}\Vert \theta \Vert^2,\lambda_2 = c_1+c_2, \eta = \frac{c_2}{c_1+c_2} $, we find the Maximum a Posterior (MAP) estimation of $(\Theta, W)$ is exactly given by (\ref{joint_lap_strat_gen}). Note that if we set $\mu = 0$ and define the pseudo-determinant as the product of all non-zero eigenvalues of $\mathcal{G(W)}$, then we recover the improper GMRF as in \cite{Kumar2019AUF}.

\section{Comparison with existing work}
Before we move on to solving our proposed Joint Laplacian stratified model, in this section, we first review and summarize the related work and compare our approach with them.

\paragraph{Laplacian regularized stratified model fitting.} Laplacian regularized stratified model fitting with a known graph has been studied for years \cite{ando2006learning,sheldon2008graphical}, and there is a recent surge of interest in improving its scalability \cite{tuck2020eigen} as well as robustness and adaptivity \cite{Duan2022AdaptiveAR,Lam2022AdaptiveDF}. There are also many recent works applying these models to  semi-supervised learning \cite{Slepev2017AnalysisO, Cabannes2020OvercomingTC}, %and applying it to many modern applications,  
%applying it and has been rediscovered and demonstrated as a powerful tool in several different applications,
%ranging from 
joint covariance estimation \cite{tuck2020fitting}, few-shot learning \cite{ziko2020laplacian}, portfolio 
construction with multiple market conditions \cite{tuck2021portfolio} and federated learning \cite{Dinh2021ANL}, to name just a few. %There are also some variants that are also proposed to improve the scalability when the number of tasks is large. Add other related works in this direction, such as eigen-stratified models and these early works \cite{ando2006learning,sheldon2008graphical}.
 % Laplacian regularization are also considered \cite{Duan2022AdaptiveAR,Lam2022AdaptiveDF}
 % It is widely used in federated learning \cite{Dinh2021ANL}, semi-supervised learning \cite{Slepev2017AnalysisO, Cabannes2020OvercomingTC}, etc.

\paragraph{Multi-task learning.} When the strata are interpreted as tasks, stratified models can also be seen as multi-task learning. %the goal is to learn models for multiple
%related tasks by taking advantage of pair-wise relationships between the tasks. 
However, to build up connection between tasks based, the literature of multi-task learning has been mainly focused on regularization terms that are not Laplacian regularization \cite{jacob2008clustered,zhou2011clustered,gonccalves2016multi,Smith2017FederatedML, Evgeniou2004RegularizedML, Zhang2010ACF}, which are typically %based on the (inverse) covariance matrix of some latent variables, which are 
less explicit and interpretable than graph Laplacian/weights and are beyond the scope of this paper. 
%both with known graphs \cite{Duan2022AdaptiveAR,Lam2022AdaptiveDF} and unknown graphs .  
%There are several different approaches to build up connection between tasks based on regularization on $\Theta = (\theta_1, \cdots, \theta_K)$. One is studied in \cite{Duan2022AdaptiveAR,Lam2022AdaptiveDF}: $R(\Theta) = \min_{\Gamma\in \Omega}\sum_{i=1}^K\lambda_i\Vert \theta_i-\gamma_i\Vert$, where $\Omega$ is a subset in $\reals^{n\times K}$ that imposes constraints on $\Gamma$ and $\{\gamma_i\}_{i=1}^K$ are columns of $\Gamma$. Another example is proposed in \cite{Smith2017FederatedML, Evgeniou2004RegularizedML, Zhang2010ACF}: $R(\Theta) = \mathbf{Tr}(\Theta \bm{\Omega} \Theta^T)$, derived from log-likelihood of Gaussian distribution. Here $\Omega$ is not in a Laplacian form and thus lacks of interpretation.
% With this concern, Laplacian regularization based on graphs are of our interests. 
% Laplacian regularization are also considered \cite{Duan2022AdaptiveAR,Lam2022AdaptiveDF}
%  It is widely used in federated learning \cite{Dinh2021ANL}, semi-supervised learning \cite{Slepev2017AnalysisO, Cabannes2020OvercomingTC}, etc. However, all these works assume a known graph.

\paragraph{Graph learning.} Graph learning from data is a canonical 
problem that has received substantial attention in the literature. Numerous studies have investigated approaches to learning a structured graph while utilizing various forms of prior knowledge, including spectral constraints, sparsity, node degrees, and edge patterns \cite{Kalofolias2017LargeSG, Kumar2019AUF, Kang2021StructuredGL}. Also we can incorporate prior knowledge along nodes with that on observation side as in \cite{Pu2020KernelBasedGL}. Structured information is frequently available in many real-word applications including gene network analysis, community detection, clustering. However, practitioners should use specific algorithms to tackle with different types of constraints. 

When there is lack of knowledge on graph structure, some general methods to learn the graph from data (or signals) have been proposed. In scenarios where there is limited knowledge about the graph structure, several general methods have been proposed for graph learning from data or signals. In the literature, these methods can be broadly categorized into three classes, all of which involve minimizing the sum of Laplacian and a regularization function of $W$. First, \cite{Dong2014LearningLM} considers solving
\begin{equation}\label{eq: TC}
    \begin{array}{ll}
        \text{minimize}_{L} & R_{\text{TC}}(L) := Tr(\Theta L \Theta^T) + \frac{C_2}{2} \Vert L\Vert^2 \\
        \text{subject to} & Tr(L) = C_1, L_{ij} = L_{ji} \leq 0, L\cdot\bm{1} = 0.
    \end{array}
\end{equation}
We refer to (\ref{eq: TC}) as the Tr-Constraint method. It imposes a hard constraint on the total edge weight directly to avoid trivial solution. But using a Frobenius norm on the Laplacian matrix is confusing: the elements of $L$ have different scales and are linearly dependent. The second formulation is to solve
\begin{equation}\label{eq: LD}
    \begin{array}{ll}
        \text{minimize}_{W} & R_{\text{LD}}(W) := Tr(\Theta \mathcal{G}(W) \Theta^T) -C_1 \bm{1}^T\log{(W\cdot\bm{1})} + \frac{C_2}{2} \Vert W\Vert^2 \\
        \text{subject to} & W_{ij} = W_{ji} \geq 0, W_{ii}=0.
    \end{array}
\end{equation}
We refer to (\ref{eq: LD}) as Log-Diagonal method. The logarithmic barrier acting on the node degree vector $W\cdot\bm{1}$ can prevent degradation and improve overall connectivity of the graph, without compromising sparsity \cite{Kalofolias2016HowTL}. However, the logarithmic barrier strictly makes every node has positive degree and thus prevents isolated node. This property is not generally desirable since it is highly possible that certain node indeed has no or even negative correlation with other nodes and it should be isolated in the graph. Such a problem is even more serious for the third class of approach based on entropy:
\begin{equation}\label{eq: Ent}
    \begin{array}{ll}
        \text{minimize}_{W} & R_{\text{Ent}}(W) := Tr(\Theta \mathcal{G}(W) \Theta^T) + \sigma^2 \sum_{i\neq j} W_{ij}(\log{W_{ij}}-1) \\
        \text{subject to} & W_{ij} = W_{ji} \geq 0, W_{ii}=0,
    \end{array}
\end{equation}
which has a closed-form solution $W_{ij} = \exp{(-\frac{\Vert \theta_i-\theta_j\Vert^2}{2\sigma^2})}$. We refer to (\ref{eq: Ent}) as Entropy method. The graph then must be fully connected and cause negative effects on performance. This method is not preferred in general compared with the other two mentioned above \cite{Kalofolias2016HowTL}, but it can serve as a good initial guess (see \S \ref{sec: method}).

Most of the methods in the literature about graph learning with minimum assumptions only focus on degree properties of nodes, neglecting the interaction between edges of a single node. However, assigning weights to each edge, given a node's degree, is a crucial yet intricate process as shown in \S \ref{sec: method}.

\paragraph{Joint graph learning with Laplacian Regularization.} To jointly learn both the model and the graph is not a new idea in the community and there are some recent works about it for specific application. Most of them combine the specific task with the two general graph learning methods, Tr-Constraint and Log-Diagonal, mentioned above, or with some modifications tailored to the task. \cite{VargasVieyra2020JointLO} considers representation learning for graph-based semi-supervised learning. It jointly learns the representation and the underlying graph for use in the downstream graph-based semi-supervised learning tasks. It adopts the Log-Diagonal method to learn the graph, and the goal is such that the representation encodes the label information injected into the graph, while the graph provides a smooth topology with respect to the transformed data. %Both the representation and the graph are then  existing semi-supervised learning algorithm. 
\cite{He2022StockPW} considers financial applications and assumes that the graph is $k$-clustered. Hence it adopts the Tr-Constraint method with an additional low rank constraint on the Laplacian matrix, which leads to a much more difficult optimization problem w.r.t. the graph weights. \cite{Zhao2021JointAG} uses a similar method and aims at unsupervised feature selection.  %as well. 
It also leverages an existing similarity graph matrix constructed from original high dimensional data as a prior. \cite{Karaaslanli2022SimultaneousGS, Li2016JointMC} consider graph clustering given heterogeneous data. They use  the Tr-Constraint method to learn graph topology while learning the labels (communities) of given data.

\paragraph{Comparison with existing methods.} The most significant distinction in our approach is the usage of a regularized log-determinant term, which offers better interpretability and overcomes limitations associated with existing methods. Notably, it enables modeling of unconnected graphs and even isolated nodes, thus improving the expressive capacity of our models and the potential to learn a broader range of graph structures compared to Log-Diagonal and Entropy methods. For instance, under circumstances where a stratum differs significantly from others, our approach allows more flexibility to model it as an isolated node, improving interpretability and accuracy. Additionally, the log-determinant term considers the interaction between each edge and their weights, instead of using information from a single edge or a single node degree as in Tr-Constraint, Log-Diagonal and Entropy method. This will help to characterize almost all essential information of a graph based on the spectral graph theory.

% \paragraph{Learning $W$ as covariance estimation across sub-models.} XXX.
% We begin by rewriting the Laplacian regularization 
% term in \eqref{eq:lap_strat} as 
% \BEQ\label{rewriting_lap_reg}
% \sum_{i=1}^K\sum_{i < j} W_{ij}\|\theta_i-\theta_j\|_2^2 
% = \frac{1}{2}\sum_{i,j=1}^KW_{ij}\|\theta_i-\theta_j\|_2^2=\mathbf{Tr}(\theta L(W) \theta^T)=\mathbf{Tr}(L(W)\theta^T\theta),
% \EEQ 
% Motivated by \eqref{rewriting_lap_reg}, if we view $\theta\in\reals^{n\times K}$ as $n$ 
% independent samples from a zero-mean Gaussian random vector in $\reals^K$, 
% then $nL(W)$ can be seen as an estimation the inverse of the 
% sample covariance matrix $\Sigma_{\theta}=\frac{1}{n}\theta^T\theta$.

\section{Optimization: Monotone accelerated proximal gradient method}\label{sec:opt}
Note that when assuming all the $l_k(\cdot)$ and $r(\cdot)$ are convex, the training objective is biconvex with respect to $\Theta$ and $W$ and it is natural to consider alternating minimization method. However, it is difficult to efficiently solve the sub-problem given $\Theta$ or $W$. With this concern, we rewrite (\ref{eq:lap_strat}) as following:
\begin{equation}
    \begin{array}{ll}
         \min F(\Theta,W) := f(\Theta, W)+g(\Theta, W)
         % s.t.\ W\in \mathbf{S}^K
    \end{array}
\end{equation}
where
\begin{equation}
    \begin{split}
        g(\Theta, W)&=\sum_k r(\theta_k)+\lambda_2\eta\Vert W\Vert_{1}+\sum_{i<j}\mathcal{I}_{W_{ij}=W_{ji}\geq 0}+\mathcal{I}_{\textbf{diag} (W)=0}\\
        f(\Theta, W)&=\sum_k l_k(\theta_k)+\frac{1}{2}\sum_{i<j}W_{ij}\Vert \theta_i - \theta_j\Vert_{2}^{2}+\frac{\lambda_2(1-\eta)}{2}\Vert W-W_0 \Vert_{F}^{2}\\
        &\quad -\lambda_1 \log{\det{(\mu I+\mathcal{G}(W))}}
    \end{split}
\end{equation}

We take $f$ as a smooth term and $g$ as a simple regularizer with proximal oracle. Overall, it is a nonconvex, nonsmooth composite optimization problem due to the Laplacian regularization. Here we adopt the MAPG algorithm, which is first proposed in \cite{NIPS2015_f7664060}.

\subsection{Algorithm}
Let $\bm{x}=(\Theta,W)$, the procedure is shown in Algorithm \ref{alg_APG}. The gradient of $l_k$ for $k=1,\cdots,K$ can be computed in parallel and thus this algorithm is still in a distributed fashion, in spite of coupling of the variables in the Laplacian regularization term.

\begin{algorithm}[ht]
\caption{\textbf{M}onotone \textbf{A}ccelerated \textbf{P}roximal \textbf{G}radient}
\label{alg_APG}
\begin{algorithmic}
\STATE{
Initialize $\bm{z}_1=\bm{x}_1=\bm{x}_0,t_1=1,t_0=0, \alpha_x,\alpha_y>0$.
}
\FOR{$k=1,\cdots$}
\STATE{
\begin{equation*}
\begin{aligned}
    &\bm{y}_k=\bm{x}_k+\frac{t_{k-1}}{t_k}(\bm{z_k}-\bm{x}_k)+\frac{t_{k-1}-1}{t_k}(\bm{x}_k-\bm{x}_{k-1}),\\
    &\bm{z}_{k+1}=\textbf{prox}_{\alpha_y g}(\bm{y}_k-\alpha_y \nabla f(\bm{y}_k)),\\
    &\bm{v}_{k+1}=\textbf{prox}_{\alpha_x g}(\bm{x}_k-\alpha_x \nabla f(\bm{x}_k)),\\
    &t_{k+1}=\frac{\sqrt{4(t_k)^2+1}+1}{2},\\
    &\bm{x}_{k+1}=\left\{\begin{array}{cl}
         \bm{z}_{k+1},& \text{if}\ F(\bm{z}_{k+1})\leq F(\bm{v}_{k+1}),\\
         \bm{v}_{k+1},& otherwise 
    \end{array}\right.
\end{aligned}
\end{equation*}
}
\ENDFOR
\end{algorithmic}
\end{algorithm}

\paragraph{Evaluate the proximal operator of $g$.} Notice that $g(\Theta,W)=\sum_{k=1}^K r(\theta_k)+g_2(W)$, where 
\[
g_2(W)=\lambda_2\eta\|W\|_1+\sum_{i<j}\mathcal{I}_{W_{ij}=W_{ji}\geq0}+\mathcal{I}_{\textbf{diag}(W)=0}.
\]
By the basic properties of proximal operators of separable sum functions \cite[\S 2.1]{parikh2014proximal}, we have for any $\alpha\geq 0$, 
\[
\textbf{prox}_{\alpha g}(\bm{x})=(\textbf{prox}_{\alpha r}(\theta_1),\dots,\textbf{prox}_{\alpha r}(\theta_K), \textbf{prox}_{\alpha g_2}(W)).
\]
The proximal operator of $r(\theta_k)$ is closed-form as is assumed at the beginning of the paper, while  
the proximal operator of $g_2(W)$ is given by %with respect to $W$ is given by
\begin{equation*}
\begin{aligned}
    \textbf{prox}_{\alpha g_2}(U)
    &=\mathop{\arg\min}\limits_{W} \lambda_2\eta\Vert W\Vert_{1}+\sum_{i,j}\mathcal{I}_{W_{ij}=W_{ji}\geq 0}+\mathcal{I}_{\textbf{diag} (W)=0}+\frac{1}{2\alpha}\Vert W-U\Vert_{F}^{2} \\
    &=\max{\left(0,(U+U^\top)/2-\lambda_2\alpha\eta\right)},
\end{aligned}
\end{equation*}
where the max is taken element-wisely and $U$ is diagonal-free. See Appendix \ref{appendix_notes} for more details on the derivation of $\textbf{prox}_{\alpha g_2}$. 

\paragraph{Complexity.} 
The proximal operator of $g(\bm{x})$ often has a closed-form solution and can be computed in parallel. The complexity is dominant by computing the gradient of $f(\bm{x})$, which includes computing the inverse of the Laplacian matrix $\mathcal{G}(W)\in \mathbf{S}^K$, and also by computing the log-determinant in the monotonicity-checking step. In practice, $K$ is generally not too large and several hundreds is preferred.

\paragraph{Initialization.}
Since the problem is nonconvex, different initializations may lead to different solutions. Here we heuristically propose to initialize $W$ as $W_0$, which does well in our experiments. As for the initial point of $\Theta$, we set it to be zero vector.

\paragraph{Stepsizes.}
For simplicity, we choose fixed stepsizes $\alpha_x, \alpha_y$ by grid search in all our numerical experiments. \cite{NIPS2015_f7664060} proposes a line search method with Barzilai-Borwein initialization to adaptively adjust the stepsizes. Note that for our formulation, the cost of computing the value $F(\bm{x})$ is very expensive since there is a log-determinant term, which has complexity of $O(K^3)$. Since line search method possibly needs to compute the function value for many times in a single iteration, it is not preferred here.

\paragraph{Stopping Criterion.}
If we are blessed with a good prior $W_0$, then $W$ should not vary too much from $W_0$ and our algorithm is easy to converge. In this case, our stopping criterion is $\Vert \bm{x}_k-\bm{x}_{k-1}\Vert < \epsilon_{tol}$.

However, in cases where our knowledge is limited or $W_0$ is significantly different from the optimal $W$, to keep training until convergence is time consuming and unnecessary. Consequently we adopt the idea of early stopping. In order to obtain better generalization performance, we test on validation set at each iteration and stop training as long as the validation score has not been improved for a specific number of iterations.

Note that for the ADMM-based algorithm to fit LRSM in \cite{tuck2019distributed}, they adopt a stopping criterion  based on the KKT condition, which is slightly different from ours. For fairness, in all the experiments in \S \ref{sec_exp}, we run their algorithm for sufficiently many iterations to ensure that it has converged and makes no improvement on performance.

\subsection{Convergence analysis}

 Since the Laplacian regularization term is not globally strongly smooth, which is a key assumption in the original proof in \cite{NIPS2015_f7664060}, we need new techniques to ensure that the iteration sequence is bounded. To be specific, we make the following assumptions.
 
\begin{assumption}
\label{asp_apg_l_smooth}
For any $k\leq K$, $l_k$ is proper and strongly smooth in any bounded sets.
\end{assumption}

\begin{assumption}
\label{asp_apg_r_lsc}
$r$ is proper, lower semicontinuous, coercive, and bounded from below (no need to be convex).
\end{assumption}

\begin{assumption}
\label{asp_apg_coercive}
For any $k\leq K$, $l_k(\theta)+r(\theta)$ is coercive, \ie, $l_k(\theta)+r(\theta)\rightarrow +\infty$ as $\Vert\theta\Vert \rightarrow\infty$.
\end{assumption}

Assumption \ref{asp_apg_l_smooth} can ensure the smooth term $f$ is locally strongly smooth in any bounded set and is generally weaker than most of that in the literature since we do not require globally strongly smoothness. Assumption \ref{asp_apg_coercive} is only for technical reason and is also required in \cite{NIPS2015_f7664060}. Typical examples of $l_k$ are regression and classification model as mentioned in \cite{tuck2019distributed}. Examples of $r$ include bounded constraint, sum of squares, $\ell_p$ with $p>0$, elastic net, OSCAR, equality or inequality constraints that are violated at infinity (\eg\ Stiefel manifold, sphere, polytope).

% XXX. Give examples of $l_k$ and $r$ satisfying these assumptions to illustrate how general they are. And probably will state these assumptions at the beginning of the paper like in most of Stephen's papers?

% \begin{remark}
% The Assumption \ref{asp_apg_l_smooth}-\ref{asp_apg_r_lsc} can ensure that $f(\bm{x})$ and $g(\bm{x})$ satisfies ths assumptions in \textbf{Theorem} \ref{thm_apg_converge}.
% \end{remark}

\begin{theorem}
\label{thm_joint_converge}
Under Assumption \ref{asp_apg_l_smooth}-\ref{asp_apg_coercive}, there exist $\alpha_x,\alpha_y>0$ in Algorithm \ref{alg_APG} such that
\begin{itemize}
    \item[(\romannumeral1)] The sequence $\{\bm{x}_k\}$ is bounded.
    \item[(\romannumeral2)] The set of limit points of $\{\bm{x}_k\}$ is nonempty. If $\bm{x}^*$ is a limit point of $\{\bm{x}_k\}$, then $\bm{x}^*$ is a critical point of $F$.
    \item[(\romannumeral3)] If $F(\bm{x})$ satisfies KL property, and the desingularising function has the form of $\phi(t)=\frac{C}{s}t^{s}$ for some $C>0, s\in (0,1]$, then $\{F(\bm{x}_k)\}$ converges to some $F(\bm{x}^*)$ with a linear or sub-linear rate (depending on $s$).
\end{itemize}
\end{theorem}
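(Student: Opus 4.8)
The idea is to use the \emph{monotone} safeguard in Algorithm~\ref{alg_APG} together with coercivity of the objective to trap the entire trajectory inside a fixed bounded set, on which $f$ \emph{is} strongly smooth, and then run the usual nonconvex proximal‑gradient / Kurdyka--{\L}ojasiewicz machinery with these \emph{local} constants in place of global ones.

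For part~(\romannumeral1) I would first show $F$ is bounded below and coercive on the feasible set $\{W\ge 0,\ \textbf{diag}(W)=0,\ W\in\symm^K\}$. Coercivity in the $\Theta$-block is Assumption~\ref{asp_apg_coercive}; coercivity in $W$ holds because $\lambda_2 R_2(W)$ grows at least linearly in $\|W\|$ while $-\lambda_1\log\det(\mu I+\mathcal{G}(W))$ decays only logarithmically: one has $\mu I\preceq \mu I+\mathcal{G}(W)$ and every eigenvalue of $\mathcal{G}(W)$ is bounded by $\mathbf{Tr}\,\mathcal{G}(W)=\|W\|_1$, so $-\lambda_1 K\log(\mu+\|W\|_1)\le-\lambda_1\log\det(\mu I+\mathcal{G}(W))\le-\lambda_1 K\log\mu$. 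Hence the sublevel set $S_0:=\{\bm{x}:F(\bm{x})\le F(\bm{x}_0)\}$ is bounded; fix a closed $\rho$-enlargement $\bar S_0$. On $\bar S_0$ the argument of the log-determinant stays uniformly positive definite, so $W\mapsto-\log\det(\mu I+\mathcal{G}(W))$ is $C^\infty$ with bounded Hessian, and by Assumption~\ref{asp_apg_l_smooth} each $l_k$ is strongly smooth there, so $\nabla f$ is $L_0$-Lipschitz on $\bar S_0$ for some $L_0$. I would then choose $\alpha_x,\alpha_y\in(0,1/L_0)$ small enough that one proximal-gradient step launched from $S_0$ cannot leave $\bar S_0$ — using the a priori displacement estimate $\|\textbf{prox}_{\alpha r}(u)-u\|\le\sqrt{2\alpha(r(u)-\inf r)}$ (valid by Assumption~\ref{asp_apg_r_lsc}) and boundedness of $\nabla f$ on $\bar S_0$. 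An induction then shows $\bm{x}_k,\bm{v}_{k+1},\bm{z}_{k+1}\in\bar S_0$: the descent lemma on $\bar S_0$ gives $F(\bm{v}_{k+1})\le F(\bm{x}_k)-(\tfrac1{\alpha_x}-\tfrac{L_0}{2})\|\bm{v}_{k+1}-\bm{x}_k\|^2\le F(\bm{x}_k)$, and the monotone choice yields $F(\bm{x}_{k+1})\le F(\bm{v}_{k+1})\le F(\bm{x}_k)$, so the iterates never escape $S_0$. This gives (\romannumeral1), and also that $\{F(\bm{x}_k)\}$ is nonincreasing and (bounded below) convergent, with $\sum_k\|\bm{v}_{k+1}-\bm{x}_k\|^2<\infty$, hence $\|\bm{v}_{k+1}-\bm{x}_k\|\to0$.

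For part~(\romannumeral2), boundedness gives a nonempty compact limit-point set. If $\bm{x}_{k_j}\to\bm{x}^*$, the optimality condition of the prox defining $\bm{v}_{k_j+1}$ reads $\tfrac1{\alpha_x}(\bm{x}_{k_j}-\bm{v}_{k_j+1})+\nabla f(\bm{v}_{k_j+1})-\nabla f(\bm{x}_{k_j})\in\partial g(\bm{v}_{k_j+1})$, i.e. the quantity on the left, call it $\xi_{k_j}$, satisfies $\nabla f(\bm{v}_{k_j+1})+\xi_{k_j}\in\partial F(\bm{v}_{k_j+1})$; since $\|\bm{v}_{k_j+1}-\bm{x}_{k_j}\|\to0$ and $\nabla f$ is Lipschitz on $\bar S_0$, we get $\bm{v}_{k_j+1}\to\bm{x}^*$ and $\nabla f(\bm{v}_{k_j+1})+\xi_{k_j}\to0$. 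Combining $F(\bm{v}_{k_j+1})\to F(\bm{x}^*)$ (lower semicontinuity for one inequality, the prox inequality plus convergence of $\{F(\bm{x}_k)\}$ for the other) with closedness of the limiting subdifferential gives $0\in\partial F(\bm{x}^*)$. The one piece inherited from the monotone-APG analysis is to argue $\|\bm{x}_{k+1}-\bm{x}_k\|\to0$ also on iterations where the extrapolated branch is taken, and to ensure $\bm{y}_k$ stays where $f$ is finite and smooth; I would handle these exactly as in~\cite{NIPS2015_f7664060}, with local constants on $\bar S_0$ replacing the global ones.

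For part~(\romannumeral3), with $\{F(\bm{x}_k)\}\downarrow F^*$ and $F$ satisfying KL with desingularising function $\phi(t)=\tfrac{C}{s}t^{s}$, I would run the classical KL rate argument (Attouch--Bolte; Bolte--Sabach--Teboulle) from the three ingredients already established: sufficient decrease $F(\bm{x}_k)-F(\bm{x}_{k+1})\ge a\|\bm{v}_{k+1}-\bm{x}_k\|^2$, the relative-error bound $\mathrm{dist}(0,\partial F(\bm{v}_{k+1}))\le(L_0+1/\alpha_x)\|\bm{v}_{k+1}-\bm{x}_k\|$, and continuity of $F$ along the sequence. Writing $e_k:=F(\bm{x}_k)-F^*$ and combining these with the KL inequality $\phi'(e_{k+1})\,\mathrm{dist}(0,\partial F(\bm{v}_{k+1}))\ge1$ yields a recursion of the form $C^2 e_{k+1}^{2s-2}(e_k-e_{k+1})\ge a/(L_0+1/\alpha_x)^2$, whose solutions are known to reach $0$ in finitely many steps, or at a linear, or a sublinear rate according to whether $s=1$, $s\in[1/2,1)$, or $s\in(0,1/2)$; the same telescoping gives $\sum_k\|\bm{v}_{k+1}-\bm{x}_k\|<\infty$, so that in fact $\bm{x}_k\to\bm{x}^*$.

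\textbf{Main obstacle.} The central difficulty is the apparent circularity in part~(\romannumeral1): the smoothness constant $L_0$ depends on the region the iterates occupy, while admissible stepsizes depend on $L_0$. The resolution is that the governing region $\bar S_0$ is pinned down by the \emph{initial point alone} via coercivity, before any stepsize is chosen, so $L_0$ and hence valid $\alpha_x,\alpha_y$ can be fixed in advance; the remaining work is the quantitative check — using the prox-displacement estimate — that proximal-gradient and extrapolation steps started in $S_0$ cannot leave $\bar S_0$. A secondary, more routine obstacle is re-deriving the bookkeeping of the monotone accelerated scheme (control of $\bm{y}_k$ and of increments on accelerated iterations) with local rather than global Lipschitz constants.
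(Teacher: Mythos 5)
Your plan is essentially the paper's own argument: coercivity pins down the sublevel set $B_0$ of the initial point, an enlarged bounded set supplies a local Lipschitz constant $L$ used to fix $\alpha_x,\alpha_y<1/L$, a prox-displacement bound plus the descent lemma and the monotone safeguard give, by induction, $F(\bm{x}_{k+1})\le F(\bm{v}_{k+1})\le F(\bm{x}_k)-\bigl(\tfrac{1}{2\alpha_x}-\tfrac{L}{2}\bigr)\|\bm{v}_{k+1}-\bm{x}_k\|^2$ and hence boundedness, after which parts (\romannumeral2) and (\romannumeral3) follow the standard prox-optimality/closedness argument and the KL recursion with the same constants, exactly as in the paper (which likewise only tracks $\bm{x}_k$ and $\bm{v}_k$ and defers the accelerated branch to \cite{NIPS2015_f7664060}). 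The only fine point to fix when writing it out is your displacement estimate: since $g$ may be $+\infty$ at the gradient-shifted point $\bm{x}_k-\alpha\nabla f(\bm{x}_k)$, one should test the prox objective against $\bm{x}_k$ itself, yielding $\|\bm{v}_{k+1}-\bm{x}_k\|\le\sqrt{2\alpha(g(\bm{x}_k)-\underline{g})+\alpha^2\|\nabla f(\bm{x}_k)\|^2}+\alpha\|\nabla f(\bm{x}_k)\|$, which is precisely the bound the paper uses.
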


\begin{proof}[Proof Sketch]
    We first prove by induction that the sequence $\{\bm{x}_k\}$ and $\{\bm{v}_k\}$ are bounded. This is ensured by monotonicity of function value of our algorithms given a sufficiently small stepsize. Then by Assumption \ref{asp_apg_l_smooth} strongly smoothness property holds at iteration sequence. The remaining part is the same as in \cite{NIPS2015_f7664060}. We defer the complete proof in Appendix \ref{appendix_proof}.
\end{proof}

\section{Numerical results}
\label{sec_exp}
% XXX. Some small fixes:
% \begin{itemize}
% \item Maybe maintain only Accuracy and F1-Score (so that we don't see the slight negative results for ANLL and AUC in Section 5.2.1? Need further discussion before deciding this. 
% \item Clearly mention how the hyper-parameters like$\gamma_{\text{den}}$, $\lambda_1$, $\lambda_2$, etc. are tuned by cross validation. 
% \item The "regularization graph" parts are only for $W_0$/classical Laplacian stratified models, right? We don't enforce any prior structure/sparsity knowledge on $W$ and learn it from scratch, right? That's the main point as this is much more efficient than doing cross validation for the entire graph, which is too high dimensional. 
% \end{itemize}

% XXX. Small todo: change accuracy and F1 to percentages (add $(\%)$ in the corresponding columns). This can show the differences more clearly and dramatically, which is more visually attractive. Another thing is to use error = 1 - accuracy (again in percentages) which will also  make the change more obvious. See e.g., \url{https://arxiv.org/pdf/2205.09224.pdf} and \url{https://arxiv.org/pdf/1705.10467.pdf}. 

We take LRSM as a baseline and show the advantages of our joint graph learning method. This section is divided into two parts, one to show improvement over good prior $W_0$, the other to demonstrate flexibility when there is no prior (\ie, $W_0=0$). We also include \textit{common model} (one that does not depend on $z$) and \textit{separate model} ($W_{ij}\equiv 0$) as two extremes. The code that implements our proposed methodology as well as the baselines to reproduce the experiments below are available online at \url{https://github.com/cvxgrp/joint-lrsm}. 

\subsection{Stratified model with prior}
In this subsection, we consider two examples in \cite{tuck2019distributed} and take the hand-designed graphs there as the prior $W_0$. Our goal is to learn better $W$ given the prior info $W_0$ through our Joint Laplacian stratified model. The results show that our method can improve the performance by a notable margin.

\subsubsection{Mesothelioma classification}

We consider the problem of predicting whether a patient has mesothelioma, a form of cancer, given their sex, age, and other medical features that were gathered during a series of patient encounters and laboratory studies. For each class, we use logistic regression to model the conditional probability of contracting mesothelioma given the features. There are 324 samples in total and we split 10\% of them as test set.

Data pre-processing as well as the setting of the model and the baseline algorithms are the same as \cite{tuck2019distributed}, except that we re-tune the local regularization weight to get better performance for the standard LRSM by cross validation (so that it serves as a stronger baseline). In addition, we set the prior $W_0$ in our joint model as the edge weight matrix of the original regularization graph in \cite{tuck2019distributed}.
Further more, we choose $\lambda_1=100, \lambda_2=12, \eta=0.1, \mu=20.0$ by 5-fold cross validation.

\paragraph{Results.} Table \ref{tab_mes} shows the test prediction error, ANLL, AUC and F1 score of the two models. We see that the joint-stratified model outperforms the standard one consistently. Figure \ref{fig:mes_diff_wt} demonstrates the heatmap of the weight difference $W-W_0$. Note that the edge weight of $W_0$ is either $10$ or $500$, and hence the magnitude of the difference is relatively small. This further demonstrates that the performance of Laplacian regularization is sensitive to the (high-dimensional) weights, making  it hard to tune by hand. Our joint learning method can therefore serve as a handy tuning procedure in this case.

\begin{table}[ht]
\centering{\small
\scalebox{0.95}{
\begin{tabular}{l|c|c|c|c}
\hline
& Error ($\%$) & ANLL & AUC & F1-Score \\
\hline
Common & $33.3$ & $0.728$ & $0.529$ & $0.593$\\
\hline
Separate & $39.4$ & $0.695$ & $0.402$ & $0.439$\\
\hline
LRSM & $24.2$ & $0.532$ & $0.582$ & $0.592$\\
\hline
Joint Laplacian stratified & $\pmb{21.2}$ & $\pmb{0.479}$ & $\pmb{0.583}$ & $\pmb{0.618}$\\
\hline
\end{tabular}
}
\caption{Mesothelioma results.}
\label{tab_mes}
}
\end{table}

\begin{figure}[h]
    \centering
    \includegraphics[scale=0.65]{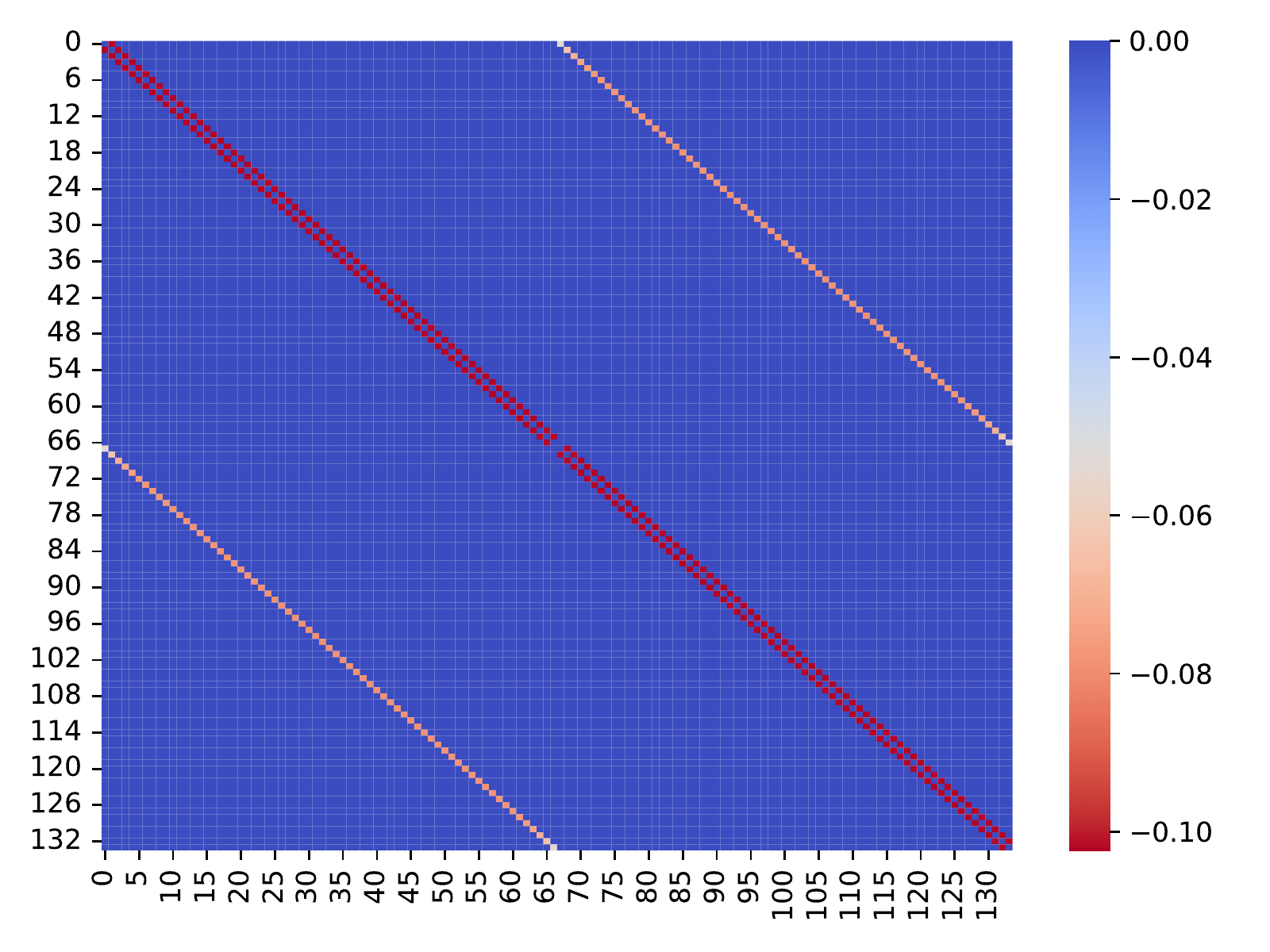}
    \caption{Heatmap of weight difference $W-W_0$.}
    \label{fig:mes_diff_wt}
\end{figure}

% \begin{table}[H]
% \centering{\small
% \scalebox{0.95}{
% \begin{tabular}{l|c|c|c|c}
% \hline
% & \bfseries{Accuracy} & \bfseries{ANLL} & \bfseries{AUC} & \bfseries{F1-Score} \\
% \hline
% Joint-stratified & $\pmb{0.76\pm0.03}$ & $\pmb{0.549\pm0.06}$ & $0.624$ & $\pmb{0.406}$\\
% \hline
% Standard stratified & $0.73\pm0.02$ & $0.59\pm0.02$ & $\pmb{0.636}$ & $0.387$\\
% \hline
% \end{tabular}
% }
% \caption{Mesothelioma results.}
% \label{tab_mes}
% }
% \end{table}

\subsubsection{Senate elections}

We model the probability that a United States Senate election in a particular state and election year is won by the Democratic party. For each state and election year, we have a single Bernoulli parameter that can be interpreted as the probability that state will elect a candidate from the Democratic party. Our loss function is the negative log-likelihood. There are 639 training records and 68 test records.

The setting of the model and baseline algorithms are the same as \cite{tuck2019distributed}. In addition, we set the prior $W_0$ in our joint model as the edge weight matrix of the original regularization graph in \cite{tuck2019distributed}. And we set $\lambda_1=0.2, \lambda_2=2.0, \eta=1\times 10^{-3}, \mu=0.1$ by performing 5-fold cross validation.

\paragraph{Results.}

\begin{table}[ht]
\centering{\small
\scalebox{0.95}{
\begin{tabular}{l|c|c|c|c}
\hline
& Accuracy ($\%$) & ANLL & AUC & F1-Score\\
\hline
Common & $35.3$ & $0.701$ & $0.500$ & $0.522$\\
\hline
Separate & $35.3$ & $0.998$ & $0.500$ & $0.522$\\
\hline
LRSM & $72.1$ & $0.608$ & $0.756$ & $0.689$\\
\hline
Joint Laplacian stratified & $\pmb{80.9}$ & $\pmb{0.546}$ & $\pmb{0.814}$ & $\pmb{0.755}$ \\
\hline
\end{tabular}
}
\caption{Election prediction results.}
\label{tab_election}
}
\end{table}

%We use $\gamma_{\mathrm{state}}=1$ and $\gamma_\mathrm{year}=4$. 

\begin{figure}[h]
    \centering
    \includegraphics[scale=0.45]{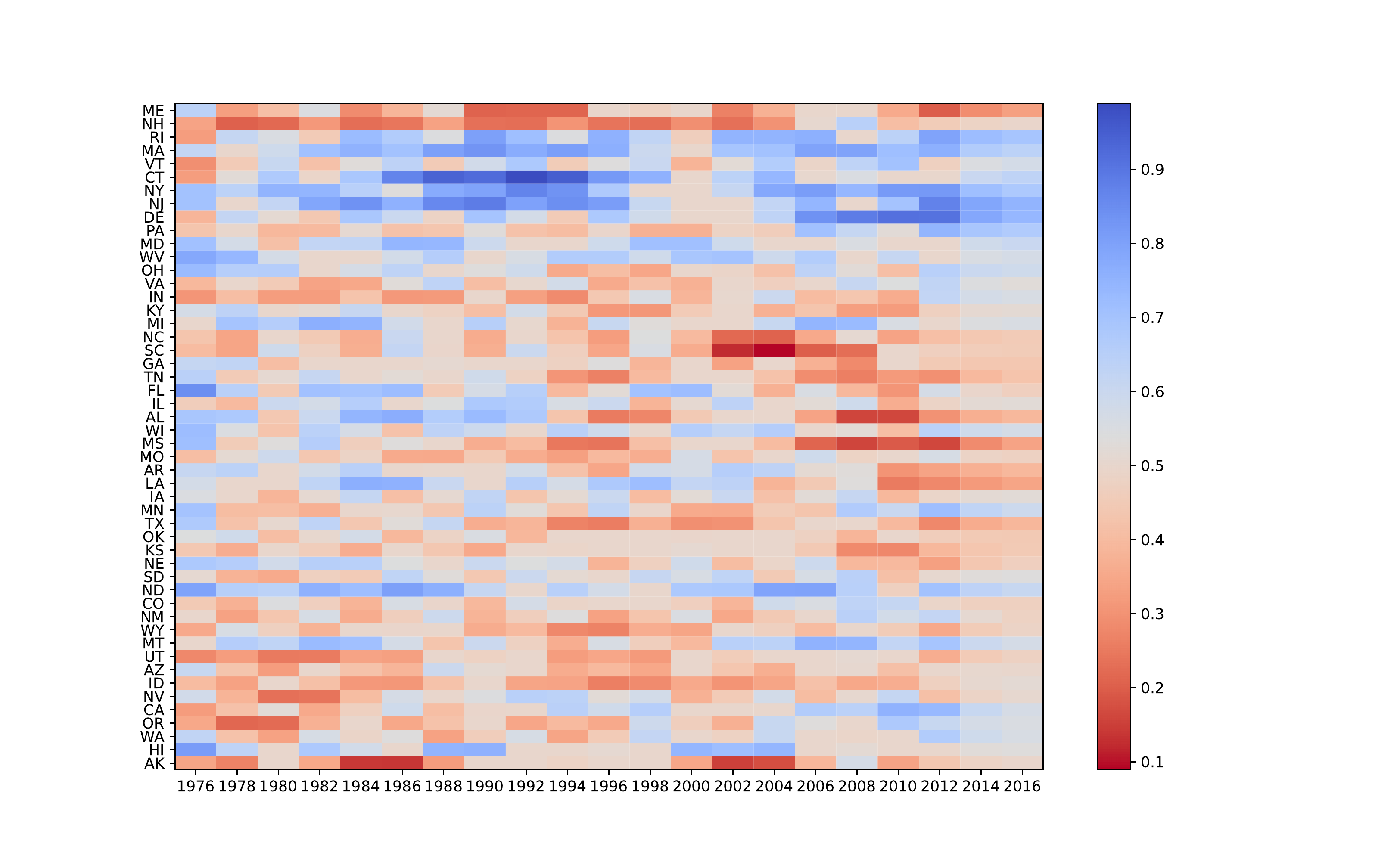}
    \caption{The Bernoulli parameters across election year and state in LRSM.}
    \label{fig:election_std}
\end{figure}

\begin{figure}[h]
    \centering
    \includegraphics[scale=0.45]{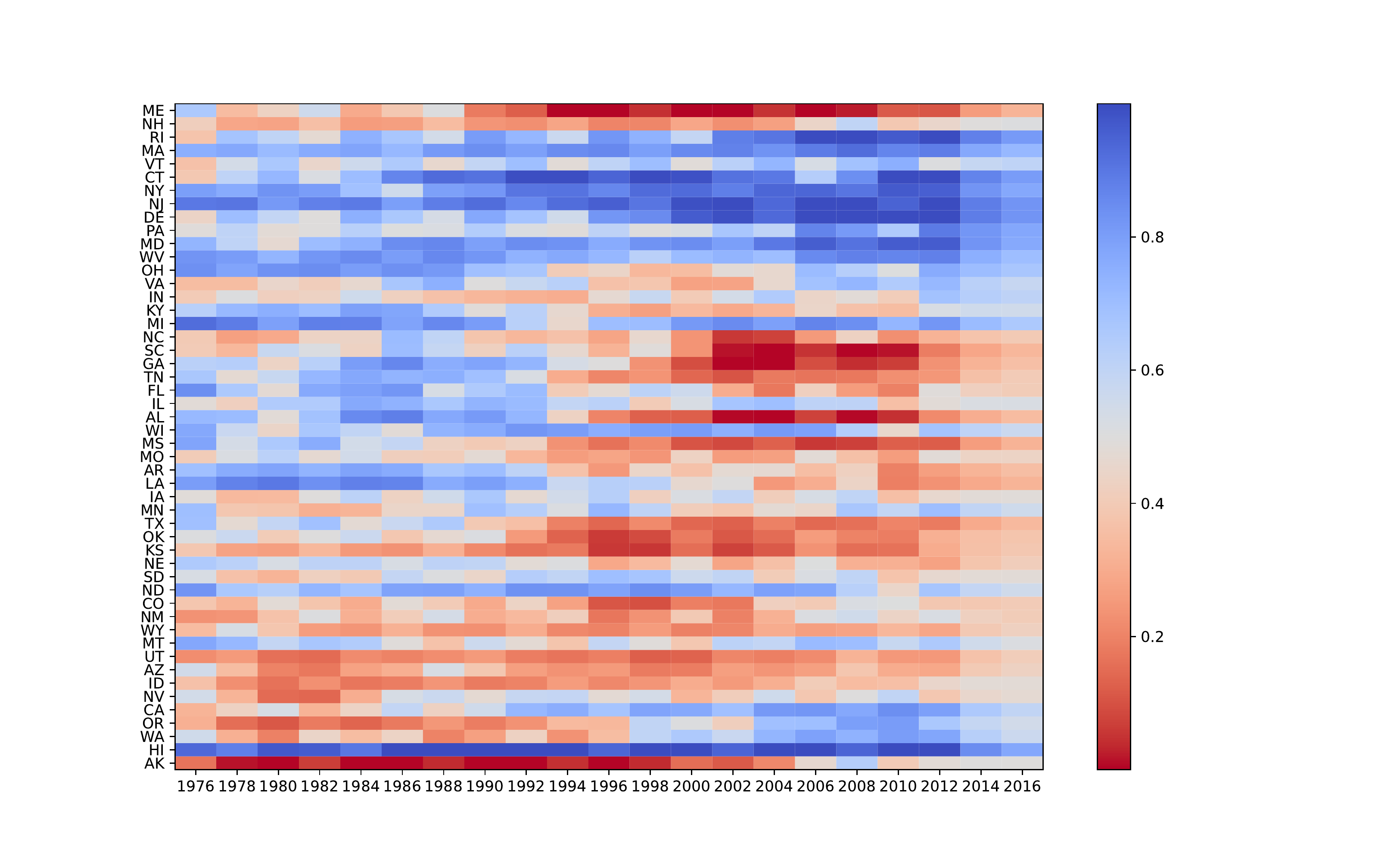}
    \caption{The Bernoulli parameters across election year and state in Joint Laplacian stratified model.}
    \label{fig:election_joint}
\end{figure}

Table \ref{tab_election} shows the test accuracy, ANLL, AUC and F1 score of the two models. We see that the Joint Laplacian stratified model outperforms the standard one significantly. Figure \ref{fig:election_std} and Figure \ref{fig:election_joint} demonstrate the bernoulli parameter of two models. High model parameters correspond
to blue (Democrat) and low model parameters correspond to red (Republican and other
parties). Note that in joint stratified model, the color is generally darker, indicating the parameter is closer to 0 or 1.

\subsection{Stratified model without prior}
In this subsection, we set $W_0$ as the zero matrix in the Joint Laplacian stratified model, while for the standard LRSM, we design the graph by hand following a similar pattern as is done in \cite{tuck2019distributed} and tune the edge weights by cross validation. With $W_0$ being zero, we do not have any prior information and the weight matrix $W$ is learned automatically in the training process.  We also consider the joint-learning baselines mentioned before, namely Log-Diagonal and Tr-Constraint. The goal is to see if our method can also compete with the standard one or even perform better than it.

Since the theoretical analysis of our MAPG algorithm can also be easily generalized to the two settings above, we apply it to these two baselines. We also consider alternating method to solve the optimization problem as in the literature. The results of Log-Diagonal and Tr-Constraint shown are the best of both optimization algorithms.

\subsubsection{Wine quality classfication}
We are aimed to predict the red wine quality given its physicochemical and sensory features. 

\paragraph{Dataset.} We get dataset describing the quality of red variants of the Portuguese "Vinho Verde" wine. There are 11 features, including fixed acidity, volatile acidity, citric acid, residual sugar, chlorides, free sulfur dioxide, total sulfur dioxide, density, pH, sulphates, alcohol. The output is a binary variable representing the quality of wine. 

\paragraph{Data records.} There are 1599 instances in the dataset. The stratification feature is density and sulphates. Since alcohol is highly correlated with the quality, we removed this feature. Therefore, we have feature vector $x\in \reals^9$ with intercept. We take $20\%$ of the dataset as test set and the rest serves as training set. We randomly split the training data in five folds and standardized the features so that they have zero mean and unit variance.

\paragraph{Data model.} Our model is logistic regression and local $\ell_1$-regularization with weight $\gamma_{local}$.

\paragraph{Regularization graph for the standard LRSM.}
We take the Cartesian product of two regularization graphs:
\begin{itemize}
    \item Density. We bin the density into 10 equally sized bins. The regularization graph is a path graph between density bins, with edge weight $\gamma_{den}$.
    \item Sulphates. We bin the sulphates feature into 10 equally sized bins. The regularization graph is a path graph between bins, with edge weight $\gamma_{sul}$.
\end{itemize}
Therefore we have $K=10\times 10=100$ stratified models. Note that the regularization graph structure above is only used for the standard Laplacian stratified models, while for our method we make no assumption on the graph except for that it consists of these $100$ strata nodes.

\paragraph{Results.} In LRSM, we set $\gamma_{den}=20,\gamma_{sul}=10$ by 5-fold cross-validation. In the Joint Laplacian stratified models, we set $\lambda_1=5.0, \lambda_2=2\times 10^{-3}, \eta=0.1, \mu=0.2$ by 5-fold cross validation. The local regularization weight is $\gamma_{local}=0.01$. 

\begin{table}[H]
\centering{\small
\scalebox{0.95}{
\begin{tabular}{l|c|c|c|c}
\hline
& Error ($\%$) & ANLL & AUC & F1-Score \\
\hline
Common & $31.2$ & $0.585$ & $0.761$ & $0.679$\\
\hline
Separate & $28.1$ & $0.827$ & $0.689$ & $0.715$\\
\hline
LRSM & $27.8$ & $\pmb{0.537}$ & $\pmb{0.802}$ & $0.717$\\
\hline
Log-Diagonal stratified & $28.4$ & $0.621$ & $0.625$ & $0.712$\\
\hline
Tr-Constraint stratified & $27.5$ & $0.781$ & $0.692$ & $0.721$\\
\hline
Joint Laplacian stratified & $\pmb{26.6}$ & $0.549$ & $0.789$ & $\pmb{0.729}$\\
\hline
\end{tabular}
}
\caption{Wine results.}
\label{tab_wine}
}
\end{table}

\begin{figure}[h]
    \centering
    \includegraphics[scale=0.6]{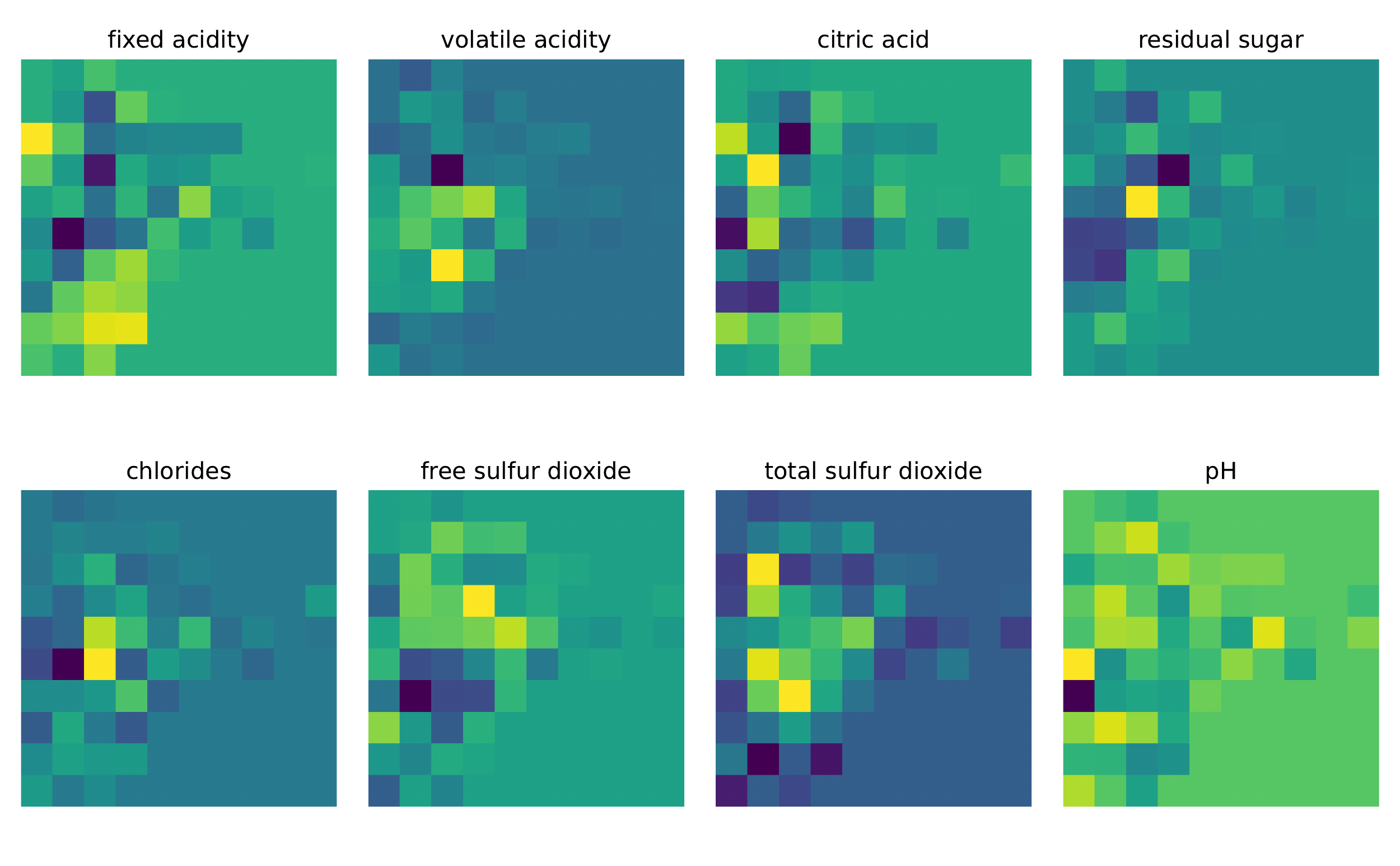}
    \caption{Wine quality coefficients in Joint Laplacian stratified model.}
    \label{fig:wine_coef}
\end{figure}

\begin{figure}[h]
    \centering
    \includegraphics[scale=0.4]{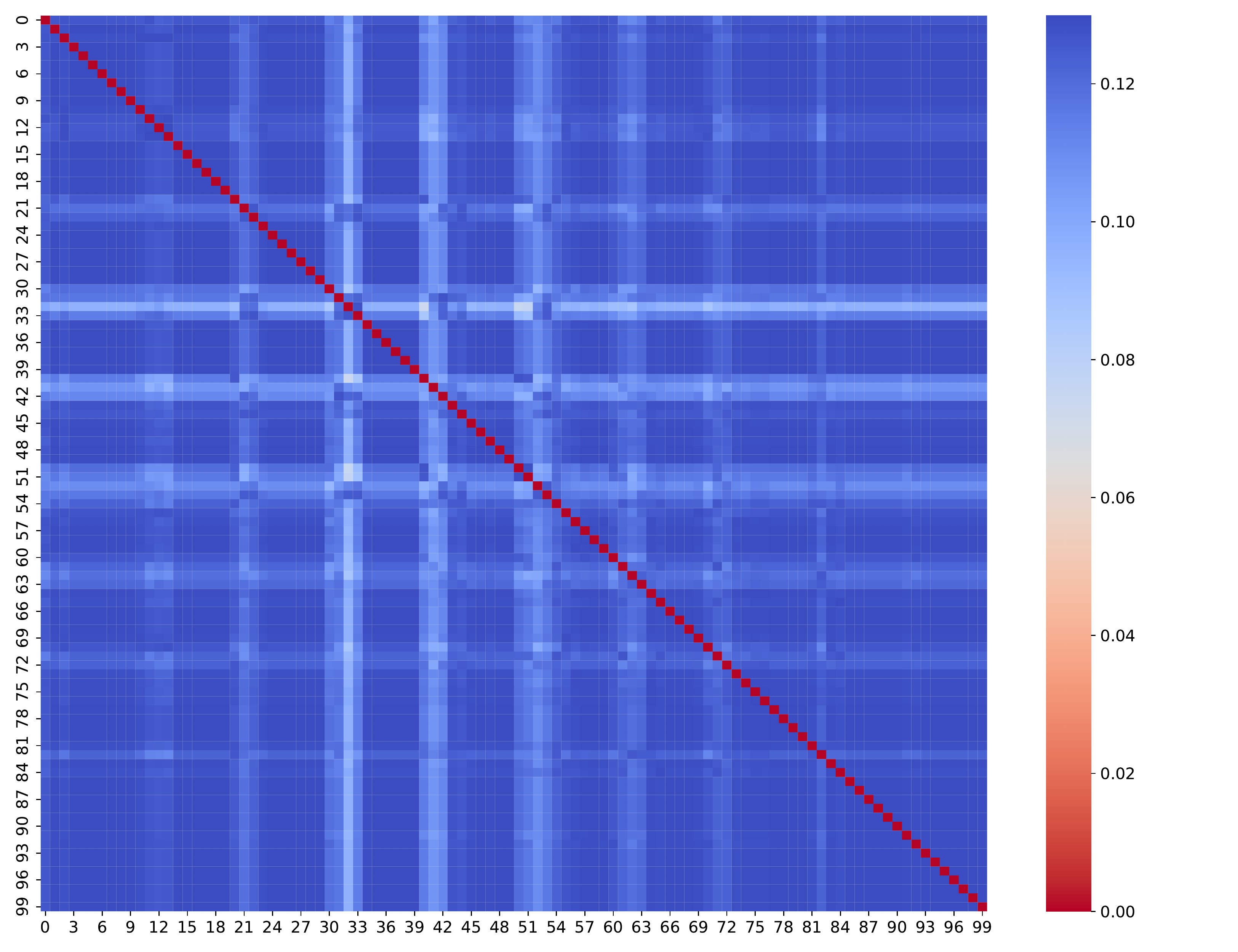}
    \caption{Wine quality edge weight matrix learnt by Joint Laplacian stratified models.}
    \label{fig:wine_wt_coef}
\end{figure}

% \begin{table}[H]
% \centering{\small
% \scalebox{0.95}{
% \begin{tabular}{l|c|c|c|c}
% \hline
% & \bfseries{Accuarcy} & \bfseries{ANLL} & \bfseries{AUC} & \bfseries{F1-Score}\\
% \hline
% Joint-stratified & $0.72\pm0.01$ & $0.554\pm0.006$ & $0.786\pm0.009$ & $0.75\pm0.01$\\
% \hline
% Standard stratified & $0.72\pm0.01$ & $0.545\pm0.008$ & $0.778\pm0.012$ & $0.75\pm0.01$\\
% \hline
% \end{tabular}
% }
% \caption{Wine results.}
% \label{tab_wine}
% }
% \end{table}

Table \ref{tab_wine} shows four evaluation metrics on the test set, indicating that the Joint Laplacian stratified model does just as well as the standard one. Our joint optimization method accurately tunes the edge weight, even without the availability of prior knowledge. Figure \ref{fig:wine_coef} shows the 8 coefficients for each bins in Joint Laplacian stratified model. Figure \ref{fig:wine_wt_coef} visualizes the edge weight learnt by our method.

\subsubsection{Concrete strength prediction}
We model the conditional quantile of concrete compressive strength given their age and ingredients like Fly ash, cement.

\paragraph{Dataset.} We obtained data describing the concrete compressive strength. The concrete is comprised of 7 components, including cement, Blast Furnace Slag, Fly Ash, Water, Superplasticizer, Coarse Aggregate, Fine Aggregate. Besides, the dataset contains information about the age of concrete. 

\paragraph{Data records.} There are 1030 instances in the dataset. The stratification feature is age and fly ash component. Therefore, we have feature vector $x\in \reals^{7}$ with intercept. We take $25\%$ of the dataset as test set and the rest serves as training set. We randomly split the training data in five folds and standardized the features so that they have zero mean and unit variance.

\paragraph{Data model.} Our model is quantile regression in statistics \cite{koenker1978regression}. Instead of conditional mean in OLS, quantile regression is targeted at the $\tau$-th conditional quantile of $y$ given $\bm{X}$, which is defined as $Q_{Y\vert \bm{X}}(\tau):=\inf\{y: \mathbf{F}_{Y\vert \bm{X}}(y)\geq \tau\}$ and $\mathbf{F}_{Y\vert \bm{X}}$ is the conditional \textbf{CDF}.

The local loss function is given by pinball loss:
\begin{equation*}
    l_k(\theta)=\sum_{i=1}^{n_k}\rho_{\tau}(y_i-\bm{x}_i^T \bm{\theta}),
\end{equation*}
where $\rho_{\tau}(z)=(1-\tau)\max\{-z, 0\}+\tau \max\{z, 0\}$. For the joint stratified model, because the loss function is non-smooth at zero, we use Huber loss to compute the gradient instead. For LRSM, the proximal operator of $l_k$ is estimated by calling CVXPY. And we also use the sum of squares local regularization function with regularization weight $\gamma_{local}$.

\paragraph{Regularization graph for the standard LRSM.} We take the Cartesian product of two regularization graphs:
\begin{itemize}
    \item Age. We bin the age into 10 equally sized bins. The regularization graph is a path graph between age bins, with edge weight $\gamma_{age}$.
    \item Fly ash. We bin the Fly ash feature into 10 equally sized bins. The regularization graph is a path graph between bins, with edge weight $\gamma_{ash}$.
\end{itemize}
Therefore we have $K=10\times 10=100$ stratified models. Again, note that for our Joint Laplacian stratified model, we make no assumption on the graph except for the same $100$ strata nodes.

\paragraph{Results.} Our goal is to find the $\tau=0.9$-th quantile. In LRSM, we set $\gamma_{age}=\gamma_{ash}=0.5$. In the Joint Laplacian stratified models, we set $\lambda_1=5.0, \lambda_2=10.0, \eta=1\times {10}^{-3}, \mu=2\times {10}^{-3}$. The local regularization weight is $\gamma_{local}=0.01$.

\begin{table}[H]
\centering{\small
\scalebox{0.95}{
\begin{tabular}{l|c|c}
\hline
& Train loss & Test loss\\
\hline
Common & $2.06$ & $2.27$ \\
\hline
Separate & $\pmb{1.23}$ & $2.52$ \\
\hline
LRSM & $1.56$ & $1.74$ \\
\hline
Log-Diagonal stratified & $1.59$ & $1.99$ \\
\hline
Tr-Constraint stratified & $1.37$ & $1.85$ \\
\hline
Joint Laplacian stratified & $1.49$ & $\pmb{1.58}$\\
\hline
\end{tabular}
}
\caption{Concrete results.}
\label{tab_concrete}
}
\end{table}

% \begin{table}[H]
% \centering{\small
% \scalebox{0.95}{
% \begin{tabular}{l|c|c}
% \hline
% & \bfseries{Train loss} & \bfseries{Test loss}\\
% \hline
% Joint-stratified & $\pmb{1.528}$ & $\pmb{1.721}$\\
% \hline
% Standard stratified & $1.534$ & $1.811$ \\
% \hline
% \end{tabular}
% }
% \caption{Concrete results.}
% \label{tab_concrete}
% }
% \end{table}

% \begin{figure}[ht]
%     \centering
%     \begin{minipage}[t]{0.48\textwidth}
%         \centering
%         \includegraphics[scale=0.45]{graph/concrete_joint_wt_heatmap.pdf}
%     \end{minipage}
%     \begin{minipage}[t]{0.48\textwidth}
%         \centering
%         \includegraphics[scale=0.45]{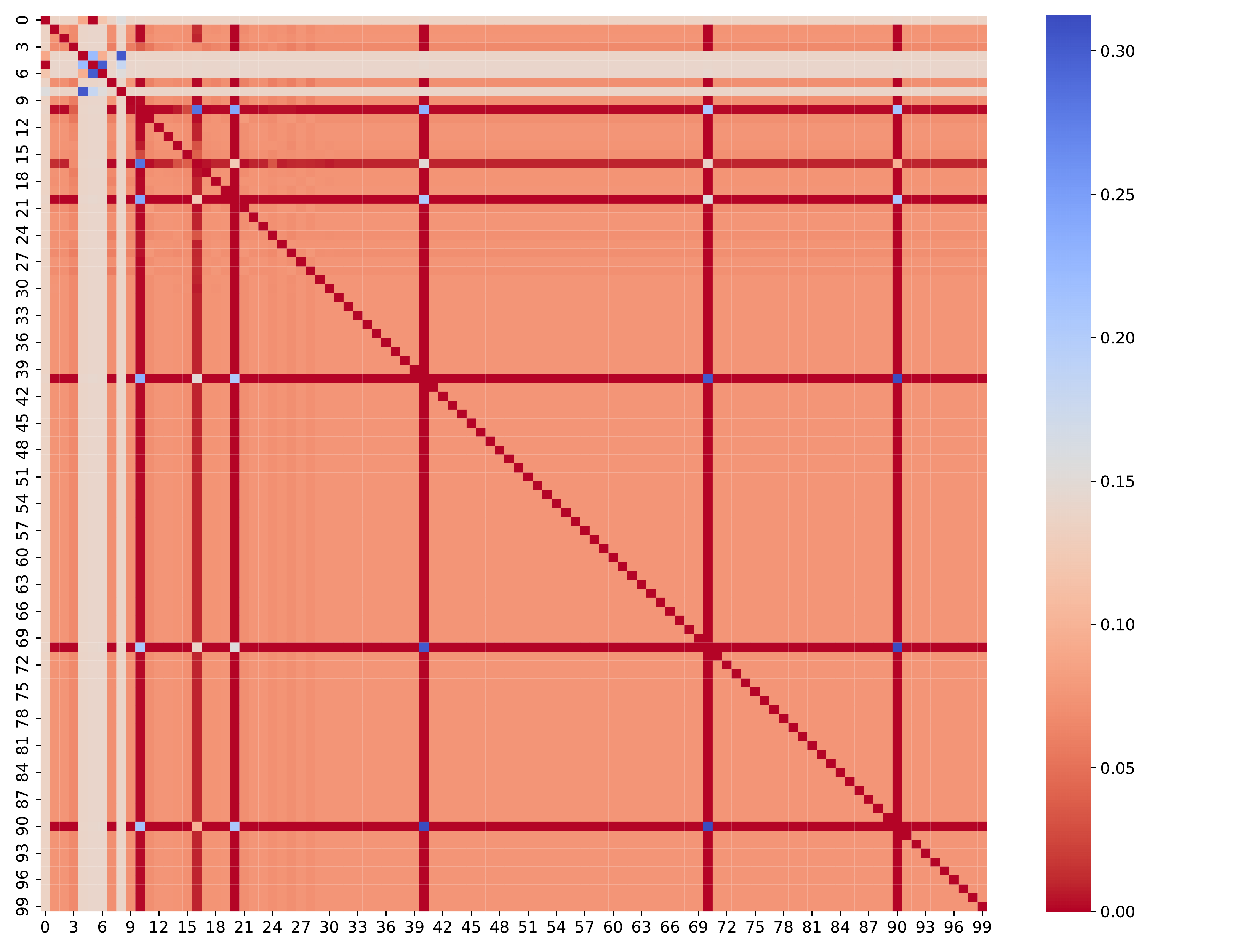}
%     \end{minipage}
%     \caption{Heatmap of weight matrix W in joint model (left) and standard model (right).}
%     \label{fig:concrete_wt}
% \end{figure}

Table \ref{tab_concrete} presents the pinball loss results for both training set and test set. The Joint Laplacian stratified model outperforms the standard one significantly. Notably, our joint optimization method efficiently tunes the edge weight without the availability of prior knowledge. Figure \ref{fig:concrete_ce_wt} demonstrates the bar plot of cement and water parameter in Joint Laplacian stratified models. Note that the parameters across the graph are not very smooth. The regularization merely on node degree is insufficient to control the graph connection structure. Therefore Log-Diagonal and Tr-Constraint do worse than our method. Figure \ref{fig:concrete_wt} is the visualization of edge weight matrix learnt by Joint Laplacian stratified models, which is very difficult to get by tuning by hand.
% Figure \ref{fig:concrete_coef} shows the heatmap of coefficients for each feature across classes.

% \begin{figure}[H]
%     \centering
%     \includegraphics[scale=0.6]{graph/concrete_coef.pdf}
%     \caption{Concrete Coefficients}
%     \label{fig:concrete_coef}
% \end{figure}

\begin{figure}[h]
    \centering
    \includegraphics[scale=0.55]{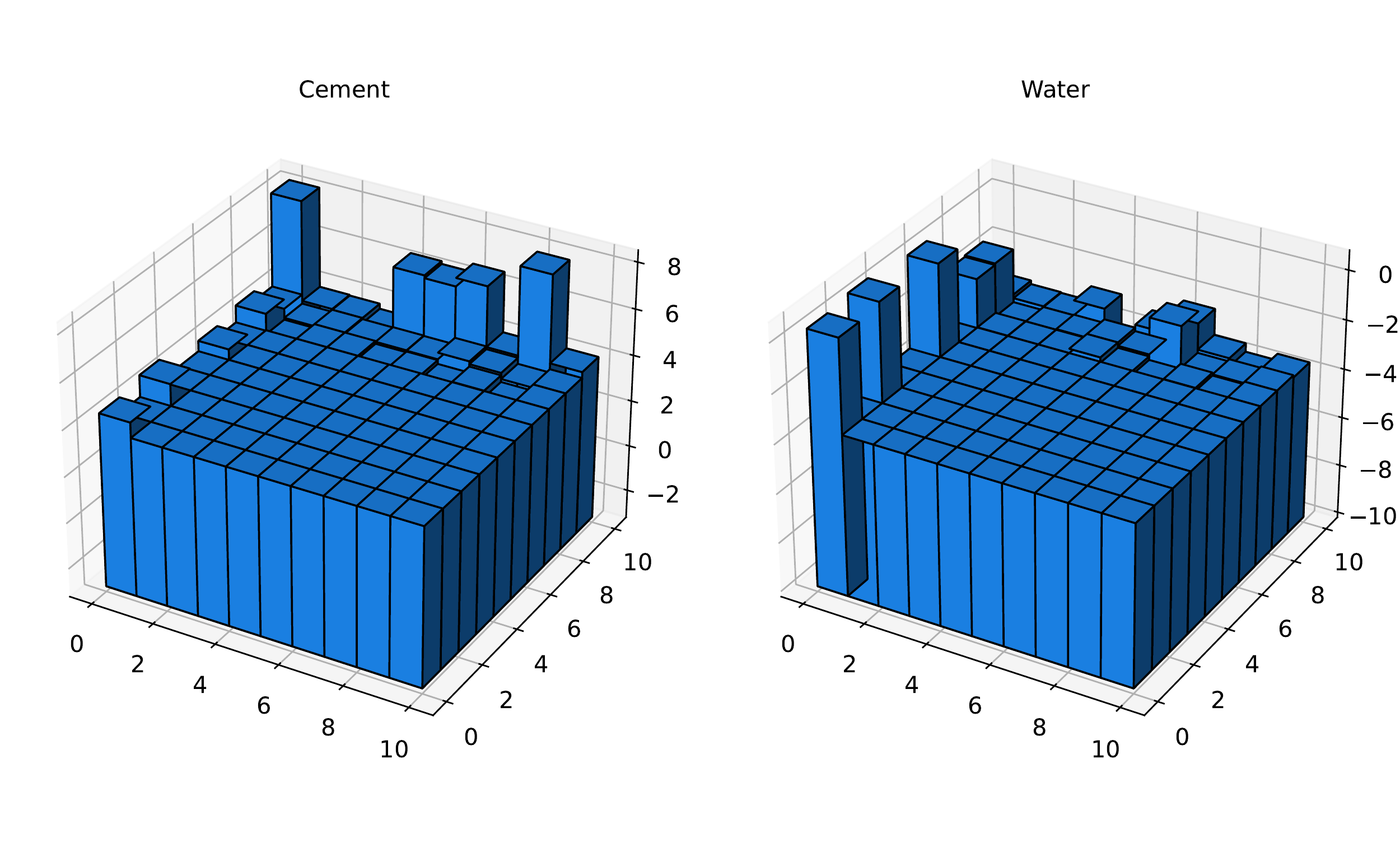}
    \caption{Cement and Water parameter across sub-models.}
    \label{fig:concrete_ce_wt}
\end{figure}

\begin{figure}[H]
    \centering
    \includegraphics[scale=0.4]{}
    \caption{Concrete edge weight learnt by Joint Laplacian stratified model.}
    \label{fig:concrete_wt}
\end{figure}

\section*{Acknowledgments}
The authors would like to express their deep gratitude to Jonathan Tuck and Anran Hu for their valuable contributions to this work. Their insightful discussions, feedback, and support were crucial to the success of this project.
%The authors are deeply grateful to the fruitful discussions with as well as the help and feedback from Jonathan Tuck and Anran Hu.

%\clearpage
%\newpage
\bibliographystyle{alpha}
\bibliography{joint_lap_strat.bib}

\newpage
\appendix

\section{Proof of Theorem \ref{thm:sensitivity}}
\label{appendix_motivation}
\begin{proof}
    Let $S = \sum_{j\neq k_0}W_{jk_0}$. Since strata $k_0$ have no data sample and zero local regularization, then \eqref{eq:lap_strat} indicates that
    \begin{equation}
        \theta_{k_0} = \frac{\sum_{j\neq k_0} W_{jk_0} \theta_j}{S},\
        \widetilde{\theta}_{k_0} = \frac{\sum_{j\neq k_0, i} W_{jk_0} \widetilde{\theta_j} + \widetilde{W}_{ik_0} \widetilde{\theta_i}}{S+\epsilon}.
    \end{equation}
    Note that both $\Theta, \widetilde{\Theta}$ are $\delta$-robust by assumption. Subtract the two equations above and rearrange it,
    \begin{equation*}
        \begin{aligned}
            ||\frac{W_{ik_0}}{S}\theta_i - \frac{W_{ik_0}+\epsilon}{S+\epsilon}\widetilde{\theta}_i|| 
            &\leq ||\theta_{k_0}-\widetilde{\theta}_{k_0}|| + \sum_{j\neq k_0,i} ||\frac{W_{jk_0}\theta_j}{S} - \frac{W_{jk_0}\widetilde{\theta}_j}{S+\epsilon}|| \\
            &\leq 2\delta'||\theta_{k_0}^*|| + \sum_{j\neq k_0,i} \left[ \frac{W_{jk_0}}{S} ||\theta_j-\theta_j^*|| + \frac{W_{jk_0}}{S+\epsilon} ||\widetilde{\theta}_j-\theta_j^*|| + \frac{\epsilon W_{jk_0}}{S(S+\epsilon)} ||\theta_j^*||\right] \\
            &\leq 2\delta'||\theta_{k_0}^*|| + \sum_{j\neq k_0,i} \left( \frac{\delta W_{jk_0}}{S} + \frac{\delta W_{jk_0}}{S+\epsilon} + \frac{\epsilon W_{jk_0}}{S(S+\epsilon)} \right) ||\theta_j^*|| \\
            &\leq \max_{j\neq i} ||\theta_j^*|| \left( 2\delta' + 2\delta + \frac{\epsilon}{S+\epsilon}(1-\frac{W_{ik_0}}{S}) \right).
        \end{aligned}
    \end{equation*}
    Additionally, we have
    \begin{equation*}
        \begin{aligned}
            ||\frac{W_{ik_0}}{S}\theta_i - \frac{W_{ik_0}+\epsilon}{S+\epsilon}\widetilde{\theta}_i||
            &\geq (-\frac{W_{ik_0}}{S} + \frac{W_{ik_0}+\epsilon}{S+\epsilon}) ||\theta_i^*|| - \frac{W_{ik_0}}{S}||\theta_i - \theta_i^*|| - \frac{W_{ik_0}+\epsilon}{S+\epsilon} ||\widetilde{\theta}_i - \theta_i^*|| \\
            &\geq \left( \frac{\epsilon}{S+\epsilon}(1-\frac{W_{ik_0}}{S}) -\frac{W_{ik_0}}{S}\delta-\frac{W_{ik_0}+\epsilon}{S+\epsilon}\delta \right)||\theta_i^*|| \\
            &\geq \left( \frac{\epsilon}{S+\epsilon}(1-\frac{W_{ik_0}}{S}-\delta) -2\frac{W_{ik_0}}{S}\delta \right)||\theta_i^*||.
        \end{aligned}
    \end{equation*}
    Combine the above two inequalities and let $A_i:= \frac{||\theta_i^*||}{\max_{j\neq i} ||\theta_j^*||}$, we obtain (provided that the denominator is positive),
    \begin{equation*}
        \frac{\epsilon}{W_{ik_0}} \leq \frac{2(\delta+\delta'+\frac{\delta W_{ik_0}}{S}A_i)}{\left(1-\frac{(1+2\delta)W_{ik_0}}{S}-\delta\right)A_i-1+\frac{W_{ik_0}}{S}-2(\delta+\delta')} \frac{S}{W_{ik_0}}.
    \end{equation*}
\end{proof}

\section{Proof of Theorem \ref{thm_joint_converge}}
\label{appendix_proof}

We first state an important lemma in our proof of Theorem \ref{thm_joint_converge}.
\begin{lemma}[\cite{bolte2014proximal}]
\label{lemma_kl}
Let  $\Omega$  be a compact set and let $F: \reals^{n} \rightarrow(-\infty,+\infty]$  be a proper and lower semicontinuous function. Assume that  $F$  is constant on  $\Omega$ and satisfies the KL property at each point of $\Omega$. Then there exists $\epsilon>0, \eta>0$ and $\varphi \in \Phi_{\eta}$, such that for all $\overline{\mathbf{u}}$ in $\Omega$ and all $\mathbf{u}$ in the following intersection
\begin{equation*}
    \left\{\mathbf{u} \in \reals^{n}: \operatorname{dist}(\mathbf{u}, \Omega)<\epsilon\right\} \bigcap\left\{\mathbf{u} \in \reals^{n}: f(\overline{\mathbf{u}})<f(\mathbf{u})<f(\overline{\mathbf{u}})+\eta\right\},
\end{equation*}
the following inequality holds
\begin{equation*}
    \varphi^{\prime}(f(\mathbf{u})-f(\overline{\mathbf{u}})) \operatorname{dist}(0, \partial f(\mathbf{u}))>1.
\end{equation*}
\end{lemma}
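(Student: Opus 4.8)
The plan is to prove this \emph{uniformized} KL inequality by stitching together the pointwise KL property over the compact set $\Omega$ via a finite-covering argument. Recall that $\Phi_\eta$ denotes the class of concave functions $\varphi:[0,\eta)\to\reals_+$ with $\varphi(0)=0$ that are continuous at $0$, are $C^1$ on $(0,\eta)$, and satisfy $\varphi'>0$; and that the KL property of $F$ (written $f$ in the displayed inequalities) at a point $\bar{\mathbf u}$ furnishes constants $\epsilon_{\bar{\mathbf u}},\eta_{\bar{\mathbf u}}>0$ and a desingularizing function $\varphi_{\bar{\mathbf u}}\in\Phi_{\eta_{\bar{\mathbf u}}}$ for which $\varphi_{\bar{\mathbf u}}'(F(\mathbf u)-F(\bar{\mathbf u}))\,\operatorname{dist}(0,\partial F(\mathbf u))>1$ holds on the local ball-and-sublevel intersection around $\bar{\mathbf u}$. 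The structural fact driving everything is that $F\equiv\xi$ on $\Omega$ for a single constant $\xi$, so that every $F(\bar{\mathbf u})$ below equals $\xi$; this lets one sublevel threshold on $F(\mathbf u)-\xi$ serve all centers simultaneously.

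First I would invoke the KL property at each $\bar{\mathbf u}\in\Omega$ to obtain the triple $(\epsilon_{\bar{\mathbf u}},\eta_{\bar{\mathbf u}},\varphi_{\bar{\mathbf u}})$, and note that $\{B(\bar{\mathbf u},\epsilon_{\bar{\mathbf u}}/2)\}_{\bar{\mathbf u}\in\Omega}$ is an open cover of the compact set $\Omega$. Extracting a finite subcover $\{B(\mathbf u_i,\epsilon_i/2)\}_{i=1}^p$ and setting $\epsilon:=\tfrac{1}{2}\min_{1\le i\le p}\epsilon_i$ and $\eta:=\min_{1\le i\le p}\eta_i$, a triangle-inequality argument shows any $\mathbf u$ with $\operatorname{dist}(\mathbf u,\Omega)<\epsilon$ lies in some $B(\mathbf u_i,\epsilon_i)$: choose $\bar{\mathbf u}\in\Omega$ with $\|\mathbf u-\bar{\mathbf u}\|<\epsilon$ and $i$ with $\bar{\mathbf u}\in B(\mathbf u_i,\epsilon_i/2)$, so $\|\mathbf u-\mathbf u_i\|<\epsilon+\epsilon_i/2\le\epsilon_i$. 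Since $\mathbf u_i\in\Omega$ gives $F(\mathbf u_i)=\xi$, the hypothesis $\xi<F(\mathbf u)<\xi+\eta\le\xi+\eta_i$ then places $\mathbf u$ precisely where the local KL inequality at $\mathbf u_i$ applies, yielding $\varphi_i'(F(\mathbf u)-\xi)\,\operatorname{dist}(0,\partial F(\mathbf u))>1$.

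The remaining, genuinely delicate step is to replace the index-dependent $\varphi_i$ by a single desingularizing function. I would define $\varphi$ through its derivative, $\varphi'(s):=\max_{1\le i\le p}\varphi_i'(s)$ for $s\in(0,\eta)$, and $\varphi(s):=\int_0^s\varphi'(t)\,dt$. Each $\varphi_i'$ is continuous and positive, so the pointwise maximum of finitely many such functions is continuous and positive; hence $\varphi$ is $C^1$ on $(0,\eta)$ with $\varphi'>0$ and $\varphi(0)=0$. Moreover concavity of each $\varphi_i$ makes $\varphi_i'$ nonincreasing, so their maximum is nonincreasing and $\varphi$ is concave; thus $\varphi\in\Phi_\eta$. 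For any admissible $\mathbf u$, taking the index $i$ produced above gives $\varphi'(F(\mathbf u)-\xi)\ge\varphi_i'(F(\mathbf u)-\xi)$, and multiplying by the nonnegative factor $\operatorname{dist}(0,\partial F(\mathbf u))$ preserves the strict bound $>1$, which is the claimed uniform inequality.

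I expect the main obstacle to be exactly this amalgamation of the $\varphi_i$: the combination must simultaneously dominate each $\varphi_i'$ pointwise (so the inequality survives) and remain in $\Phi_\eta$, in particular stay concave. Taking the integral of the pointwise maximum of the \emph{derivatives} reconciles both demands, whereas the naive choice $\varphi=\max_i\varphi_i$ fails on both counts—its derivative at $s$ equals that of the value-active index, which need not dominate the correct $\varphi_i'$, and a pointwise maximum of concave functions is generally not concave. A secondary care point is the uniform radius: one must shrink to half-radius balls before extracting the finite subcover so that the whole $\epsilon$-neighborhood of $\Omega$, and not merely $\Omega$ itself, is absorbed by the chosen balls.
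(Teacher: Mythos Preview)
The paper does not prove this lemma; it is quoted verbatim from \cite{bolte2014proximal} and invoked as a black box in the proof of Theorem~\ref{thm_mapg_converge}. So there is no in-paper proof to compare against, and the relevant question is simply whether your argument is sound.

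Your proof is correct. The compactness/finite-subcover step, the half-radius trick to absorb the full $\epsilon$-neighborhood of $\Omega$, and the amalgamation of the desingularizing functions all go through. On the last point, your construction $\varphi'(s)=\max_i\varphi_i'(s)$ with $\varphi(s)=\int_0^s\varphi'(t)\,dt$ is valid: the maximum of finitely many continuous nonincreasing positive functions is continuous, nonincreasing, and positive, so $\varphi$ is $C^1$, concave, and strictly increasing; integrability near $0$ follows from $\max_i\varphi_i'\le\sum_i\varphi_i'$ together with $\varphi_i(s)\to0$. For comparison, the original argument in \cite{bolte2014proximal} takes the simpler amalgam $\varphi=\sum_{i=1}^p\varphi_i$, which immediately lies in $\Phi_\eta$ (sum of concave is concave) and satisfies $\varphi'\ge\varphi_i'$ for every $i$. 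Both constructions achieve the same goal; the sum is slightly more economical since it avoids checking regularity of the pointwise maximum.
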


Below we restate Theorem \ref{thm_joint_converge} for general $f, g$ and give a complete proof. Note that Assumption (\ref{asp_apg_l_smooth})-(\ref{asp_apg_coercive}) can ensure the assumptions below and thus Theorem \ref{thm_joint_converge} is a special case of Theorem \ref{thm_mapg_converge}.
\begin{theorem}
\label{thm_mapg_converge}
Consider the general problem
\begin{equation}
    \mathop{\min}\limits_{\bm{x}\in \reals^n} F(\bm{x})=f(\bm{x})+g(\bm{x})
\end{equation}
with the following assumptions
\begin{itemize}
    \item[A.1] $f(\bm{x})$ is proper and strongly smooth on any bounded sets.
    \item[A.2] $g(\bm{x})$ is proper, lower semicontinuous, bounded from below by $\underline{g}>-\infty$.
    \item[A.3] $F(\bm{x})$ is coercive, that is, $F(\bm{x})\rightarrow +\infty$ as $\Vert\bm{x}\Vert\rightarrow +\infty$.
\end{itemize}
Then there exist $\alpha_x,\alpha_y>0$ in \textbf{Algorithm} \ref{alg_APG} such that
\begin{itemize}
    \item[(\romannumeral1)] The sequences $\{\bm{x}_k\}$ and $\{\bm{v}_k\}$ are bounded, and $\mathop{\lim}\limits_{k\rightarrow\infty}\Vert \bm{v}_{k+1}-\bm{x}_k\Vert=0$.
    \item[(\romannumeral2)] The set of limit points of $\{\bm{x}_k\}$ is nonempty. If $\bm{x}^*$ is a limit point of $\{\bm{x}_k\}$, then $\bm{x}^*$ is a critical point of $F$.
    \item[(\romannumeral3)] If $F(\bm{x})$ satisfies KL property, and the desingularising function has the form of $\varphi(t)=\frac{C}{s}t^{s}$ for some $C>0, s\in (0,1]$, then
    \begin{itemize}
        \item[1.] If $s=1$, then there exists  $k_{1}$ such that  $F\left(\bm{x}_{k}\right)=F^{*}$ for all $k>k_{1}$ and the algorithm terminates in finite steps.
        
        \item[2.] If $s \in\left[\frac{1}{2}, 1\right)$, then there exists $k_{2}$ such that for all  $k>k_{2}$,
        \begin{equation*}
            F\left(\bm{x}_{k}\right)-F^{*} \leq\left(\frac{d_{1} C^{2}}{1+d_{1} C^{2}}\right)^{k-k_{2}} r_{k_{2}}.
        \end{equation*}
        
        \item[3.] If $s \in\left(0, \frac{1}{2}\right)$, then there exists $k_{3}$ such that for all $k>k_{3}$,
        \begin{equation*}
            F\left(\bm{x}_{k}\right)-F^{*} \leq\left(\frac{C}{\left(k-k_{3}\right) d_{2}(1-2 s)}\right)^{\frac{1}{1-2 s}}
        \end{equation*}
        where $F^{*}$ is the same function value at all the accumulation points of  $\left\{\bm{x}_{k}\right\}$, $r_k=F\left(\bm{v}_{k}\right)- F^{*}$, $d_{1}=\left(\frac{1}{\alpha_{x}}+L\right)^{2} /\left(\frac{1}{2 \alpha_{x}}-\frac{L}{2}\right)$, $d_{2}=\min \left\{\frac{1}{2 d_{1} C}, \frac{C}{1-2 s}\left(2^{\frac{2 s-1}{2 s-2}}-1\right) r_{0}^{2 s-1}\right\}$.
    \end{itemize}

\end{itemize}
% the sequences $\{\bm{x}_k\}_k$ and $\{\bm{v}_k\}_k$ generated by Algorithm are bounded. Let $\bm{x}^*$ be any accumulation point of $\{\bm{x}_k\}_k$, then $\bm{x}^*$ is a critical point of $F$.

\end{theorem}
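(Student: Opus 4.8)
### Proof proposal for Theorem~\ref{thm_mapg_converge}

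\textbf{Overall strategy.} The plan is to establish the three claims by bootstrapping from the monotone descent built into the \textsc{MAPG} algorithm: the subsequence $\{\bm{v}_k\}$ is generated by an ordinary proximal-gradient step from $\bm{x}_k$, so a sufficient-decrease estimate together with the $F(\bm{x}_{k+1})\le F(\bm{v}_{k+1})$ selection rule forces $\{F(\bm{x}_k)\}$ to be nonincreasing. Coercivity (A.3) then confines the whole trajectory to a sublevel set $\Omega_0=\{\bm{x}: F(\bm{x})\le F(\bm{x}_0)\}$, which is compact, and on that fixed compact set A.1 upgrades to a genuine global Lipschitz-gradient constant $L$. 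Once boundedness is in hand, the remaining analysis — subsequential convergence to critical points, and the KL-based rate dichotomy — follows the template of \cite{NIPS2015_f7664060} almost verbatim, because all of their estimates were only ever used along the (now provably bounded) iterates.

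\textbf{Step 1: boundedness by induction (claim (i)).} I would argue inductively that $\bm{x}_k,\bm{v}_k\in\Omega_0$ for every $k$, and simultaneously that $F(\bm{x}_{k+1})\le F(\bm{x}_k)$. Given $\bm{x}_k\in\Omega_0$, the point $\bm{v}_{k+1}=\textbf{prox}_{\alpha_x g}(\bm{x}_k-\alpha_x\nabla f(\bm{x}_k))$ is the exact minimizer of the proximal model; using strong smoothness of $f$ on $\Omega_0$ with constant $L$ and choosing $\alpha_x<1/L$, the standard proximal-gradient descent lemma gives
\begin{equation*}
F(\bm{v}_{k+1})\le F(\bm{x}_k)-\Bigl(\tfrac{1}{2\alpha_x}-\tfrac{L}{2}\Bigr)\Vert\bm{v}_{k+1}-\bm{x}_k\Vert^2,
\end{equation*}
so $\bm{v}_{k+1}\in\Omega_0$; the selection rule then yields $\bm{x}_{k+1}\in\Omega_0$ with $F(\bm{x}_{k+1})\le F(\bm{v}_{k+1})\le F(\bm{x}_k)$, closing the induction. (The mild subtlety is that $\bm{y}_k$ — an extrapolated point — need not lie in $\Omega_0$; but $\bm{y}_k$ only enters through $\bm{z}_{k+1}$, which is accepted as $\bm{x}_{k+1}$ only when $F(\bm{z}_{k+1})\le F(\bm{v}_{k+1})$, so the monotone bound on $F(\bm{x}_{k+1})$ is unaffected, and one can take $\alpha_y$ small enough that the gradient evaluations needed for $\bm{z}_{k+1}$ all occur in a fixed enlarged bounded neighbourhood of $\Omega_0$.) Summing the displayed inequality over $k$ and using boundedness below of $F$ (from A.2 and boundedness of $f$ on $\Omega_0$) gives $\sum_k\Vert\bm{v}_{k+1}-\bm{x}_k\Vert^2<\infty$, hence $\Vert\bm{v}_{k+1}-\bm{x}_k\Vert\to 0$.

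\textbf{Step 2: critical limit points (claim (ii)) and Step 3: KL rates (claim (iii)).} Boundedness guarantees limit points exist. For a limit point $\bm{x}^*=\lim_j\bm{x}_{k_j}$, pass to the limit in the optimality condition of the prox defining $\bm{v}_{k_j+1}$: $0\in\nabla f(\bm{x}_{k_j})+\tfrac{1}{\alpha_x}(\bm{x}_{k_j}-\bm{v}_{k_j+1})+\partial g(\bm{v}_{k_j+1})$; since $\bm{v}_{k_j+1}\to\bm{x}^*$, $\nabla f$ is continuous on $\Omega_0$, and $\partial g$ is closed, we get $0\in\nabla f(\bm{x}^*)+\partial g(\bm{x}^*)$, i.e. $\bm{x}^*$ is critical. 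For (iii), $F$ is constant (equal to the limit $F^*$ of the monotone sequence) on the compact set of limit points; invoking the uniformized KL inequality of Lemma~\ref{lemma_kl} with desingularising function $\varphi(t)=\tfrac{C}{s}t^s$, one runs the now-standard argument: combine the sufficient-decrease inequality, the subgradient bound $\operatorname{dist}(0,\partial F(\bm{v}_{k+1}))\le(\tfrac{1}{\alpha_x}+L)\Vert\bm{v}_{k+1}-\bm{x}_k\Vert$, and the KL inequality to obtain a one-step recursion for $r_k=F(\bm{v}_k)-F^*$; the three cases $s=1$, $s\in[\tfrac12,1)$, $s\in(0,\tfrac12)$ give finite termination, linear, and sublinear rates respectively, with the constants $d_1,d_2$ as stated. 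This portion is identical in structure to \cite{NIPS2015_f7664060} and \cite{bolte2014proximal} and I would cite it rather than reproduce it.

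\textbf{Main obstacle.} The one genuinely new point — and the reason the theorem is not an immediate corollary of \cite{NIPS2015_f7664060} — is establishing boundedness \emph{without} a global smoothness constant: the $-\lambda_1\log\det(\mu I+\mathcal{G}(W))$ term has gradient that blows up as $\mathcal{G}(W)$ approaches singularity, so $\nabla f$ is only locally Lipschitz. The delicate part of Step~1 is therefore making the induction self-consistent: the descent lemma needs a Lipschitz constant, but the constant depends on the region the iterates live in, which is what we are trying to prove. The resolution is that the sublevel set $\Omega_0$ is determined \emph{a priori} by $F(\bm{x}_0)$ and is compact by coercivity, so $L:=\sup_{\Omega_0'}\Vert\nabla^2 f\Vert$ over a slightly enlarged compact neighbourhood $\Omega_0'$ is a finite number fixed before the algorithm runs; one then picks $\alpha_x,\alpha_y$ depending on this $L$. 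Care is needed to ensure the extrapolation steps $\bm{y}_k$ (and hence the evaluations $\nabla f(\bm{y}_k)$) stay within $\Omega_0'$ — this is where one uses that $t_{k-1}/t_k\in(0,1)$ and $(t_{k-1}-1)/t_k$ is bounded, so $\bm{y}_k$ is a bounded-coefficient affine combination of consecutive iterates already known to lie in $\Omega_0$.
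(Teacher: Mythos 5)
Your overall architecture matches the paper's proof: monotone decrease supplied by the $\bm{v}$-branch plus the selection rule $F(\bm{x}_{k+1})\le F(\bm{v}_{k+1})$ keeps $\{F(\bm{x}_k)\}$ nonincreasing, coercivity confines the iterates to the compact sublevel set, and the KL-based rate analysis of \cite{NIPS2015_f7664060,bolte2014proximal} is then reused. However, there is a genuine gap at exactly the step that makes the theorem more than a citation. In your Step 1 you apply the descent lemma ``using strong smoothness of $f$ on $\Omega_0$ with constant $L$'', but the descent lemma must hold along the segment from $\bm{x}_k$ to $\bm{v}_{k+1}$, and at that point of the induction nothing places $\bm{v}_{k+1}$ in $\Omega_0$ (its membership is what the descent inequality is supposed to produce). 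Your ``main obstacle'' paragraph correctly calls for an enlarged compact set $\Omega_0'$ with a constant $L$ fixed in advance, but you never supply the ingredient that guarantees $\bm{v}_{k+1}\in\Omega_0'$: an a priori bound on the prox step length. The paper obtains it by testing the prox subproblem defining $\bm{v}_{k+1}$ against the candidate $\bm{x}_k$, which together with A.2 ($g\ge\underline{g}$, giving $G_g$ on $B_0$) and the gradient bound $G_f$ on $B_0$ yields $\Vert\bm{v}_{k+1}-\bm{x}_k\Vert\le\sqrt{2\alpha_x(g(\bm{x}_k)-\underline{g})+\alpha_x^2\Vert\nabla f(\bm{x}_k)\Vert^2}+\alpha_x\Vert\nabla f(\bm{x}_k)\Vert\le\sqrt{2\alpha_0 G_g}+2\alpha_0 G_f$ whenever $\alpha_x\le\alpha_0$, with $\alpha_0$ fixed \emph{before} $L$; the enlargement $B_1$ is defined with this radius, $L$ is the smoothness constant on $B_1$, and then $\alpha_x=\alpha_y<\min\{\alpha_0,1/L\}$. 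This ordering is what breaks the circularity between stepsize, Lipschitz constant, and the region visited by the iterates; without this (or an equivalent) step-length estimate your induction does not close.

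Two secondary points. First, your concern about keeping the extrapolated points $\bm{y}_k$ inside $\Omega_0'$ is unnecessary and your proposed control of it would not work anyway: the proof needs nothing about $\bm{y}_k$ or $\bm{z}_{k+1}$ beyond the selection inequality $F(\bm{x}_{k+1})\le F(\bm{v}_{k+1})$, which holds by construction wherever $\bm{y}_k$ lies (A.1 gives differentiability everywhere, so $\nabla f(\bm{y}_k)$ is well defined); and $\bm{y}_k$ depends on $\bm{z}_k$, which is not known to lie in any bounded set, so ``bounded-coefficient affine combination of points in $\Omega_0$'' does not apply. Second, in (ii), closedness of the limiting subdifferential of a nonconvex lsc $g$ requires $g$-attentive convergence, i.e.\ $g(\bm{v}_{k_j+1})\to g(\bm{x}^*)$; the paper derives this from the prox optimality of $\bm{v}_{k_j+1}$ tested against $\bm{x}^*$ combined with lower semicontinuity of $g$, a detail you should include even though it is standard.
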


\begin{proof}
Let the sublevel set $B_0:=\{\bm{x}:F(\bm{x})\leq F(\bm{x}_0)\}$, which is a bounded set by coerciveness assumption. Then there exits $G_f,G_g<+\infty$ such that $\Vert\nabla f(\bm{x})\Vert \leq G_f$ for any $\bm{x}\in B_0$ and $g(\bm{x})-\underline{g}\leq G_g$, for any $\bm{x}\in B_0\cap \textbf{dom}\ g$. Let $\alpha_0>0$ be any positive constant, and $B_1:=\textbf{conv}\{\bm{x}: \operatorname{dist}(\bm{x},B_0)\leq \sqrt{2\alpha_0G_g}+2\alpha_0G_f\}$. Then $B_1$ is also a bounded set and thus $f$ has strong smooth coefficient $L$ on $B_1$. We show that $\alpha_x=\alpha_y<\min\{\alpha_0, \frac{1}{L}\}$ suffices. 

First we show by induction that, $F(\bm{x}_{k})-F(\bm{x}_{k-1})\leq -\delta\Vert \bm{v}_{k}-\bm{x}_{k-1}\Vert^2$ for $\delta=\frac{1}{2\alpha_x}-\frac{L}{2}>0$ and thus $\bm{x}_k$ belongs to $B_0$ for all $k$.

The case for $k=0$ is trivial.

Suppose that $\bm{x}_k$ belongs to $B_0$. Then by construction of $\bm{v}_{k+1}$, we have 
\begin{equation}
\label{pf_sub_grad}
    0\in \partial g(\bm{v}_{k+1})+\frac{\bm{v}_{k+1}-(\bm{x}_k-\alpha_x\nabla f(\bm{x}_k))}{\alpha_x},
\end{equation}
where $\partial g$ denotes the limiting $g$-subdifferential.

This implies 
\begin{equation*}
    g(\bm{v}_{k+1})+\frac{1}{2\alpha_x}\Vert \bm{v}_{k+1}-(\bm{x}_k-\alpha_x\nabla f(\bm{x}_k))\Vert^2\leq g(\bm{x}_k)+\frac{\alpha_x}{2}\Vert \nabla f(\bm{x}_k)\Vert^2,
\end{equation*}
\begin{equation}
\begin{aligned}
     \Vert \bm{v}_{k+1}-\bm{x}_k\Vert
    &\leq \sqrt{2\alpha_x(g(\bm{x}_k)-\underline{g})+\alpha_x^2\Vert \nabla f(\bm{x}_k)\Vert^2}+\alpha_x\Vert \nabla f(\bm{x}_k)\Vert\\
    &\leq \sqrt{2\alpha_0G_g}+2\alpha_0G_f.
\end{aligned}
\end{equation}
Therefore $\bm{v}_{k+1}$ belongs to $B_1$. For Lipschitz differentiable $f$ on the convex set $B_1$ we have
\begin{equation}
\label{pf_suff_des}
\begin{aligned}
    F(\bm{v}_{k+1})
    &\leq g(\bm{v}_{k+1})+f(\bm{x}_k)+\langle \nabla f(\bm{x}_k),\bm{v}_{k+1}-\bm{x}_k\rangle+\frac{L}{2}\Vert \bm{v}_{k+1}-\bm{x}_k\Vert^2\\
    &\leq F(\bm{x}_k)-(\frac{1}{2\alpha_x}-\frac{L}{2})\Vert \bm{v}_{k+1}-\bm{x}_k\Vert^2.
\end{aligned}
\end{equation}
Hence 
\begin{equation}
\label{pf_suf_dscnt}
    F(\bm{x}_{k+1})\leq F(\bm{v}_{k+1})\leq F(\bm{x}_k)-\delta\Vert \bm{v}_{k+1}-\bm{x}_k\Vert^2.
\end{equation}
By induction assumption, we have $\bm{x}_{k+1}, \bm{v}_{k+1}$ belongs to $B_0$.

The following procedures are the same as in \cite{NIPS2015_f7664060}.

\begin{itemize}
    \item[(\romannumeral1)] The claims above verifies the boundedness of $\{\bm{x}_k\}$ and $\{\bm{v}_k\}$. Adding up \ref{pf_suf_dscnt} from $1$ to $k$, we have
    \begin{equation}
        \delta\sum_{i=1}^{k}\Vert \bm{v}_{i+1}-\bm{x}_i\Vert^2\leq F(\bm{x}_{1})-F(\bm{x}_{k+1})\leq F(\bm{x}_{1})-F^*,
    \end{equation}
    and thus $\sum_{i=1}^{\infty}\Vert \bm{v}_{i+1}-\bm{x}_i\Vert^2<+\infty$, in particular, $\mathop{\lim}\limits_{k\rightarrow\infty}\Vert \bm{v}_{k+1}-\bm{x}_k\Vert=0$.
    
    \item[(\romannumeral2)] Since $\{\bm{x}_k\}$ is bounded, its limit point set must be nonempty. Assume there is a subsequence $\{\bm{x}_{k_j}\}$ that converges to $\bm{x}^*$. Then from (\romannumeral1) we have $\Vert \bm{v}_{k_j+1}-\bm{x}^*\Vert\rightarrow 0$. Note that by construction of $\bm{v}_{k}$,
    \begin{equation*}
        g(\bm{v}_{k_j+1})+\frac{1}{2\alpha_x}\Vert \bm{v}_{k_j+1}-(\bm{x}_{k_j}-\alpha_x\nabla f(\bm{x}_{k_j}))\Vert^2\leq
        g(\bm{x}^*)+\frac{1}{2\alpha_x}\Vert \bm{x}^*-(\bm{x}_{k_j}-\alpha_x\nabla f(\bm{x}_{k_j}))\Vert^2.
    \end{equation*}
    So 
    \begin{equation*}
        \mathop{\limsup}\limits_{j\rightarrow\infty} g(\bm{v}_{k_j+1})\leq g(\bm{x}^*).
    \end{equation*}
    Because $g$ is lower semicontinuous, the inequality above is actually an equality, i.e.
    \begin{equation}
        \mathop{\lim}\limits_{j\rightarrow\infty}g(\bm{v}_{k_j+1})= g(\bm{x}^*).
    \end{equation}
    Hence $F(\bm{v}_{k_j+1})\rightarrow F(\bm{x}^*)$ and by (\ref{pf_sub_grad}), we have
    \begin{equation}
        0\in \partial F(\bm{x}^*).
    \end{equation}
    
    \item[(\romannumeral3)] From (\ref{pf_sub_grad}), we have
    \begin{equation}
        \label{pf_kl_ub}
        \operatorname{dist}(0,\partial F(\bm{v}_{k+1}))\leq \Vert \nabla f(\bm{x}_k)-\nabla f(\bm{v}_{k+1})+\frac{1}{\alpha_x}(\bm{v}_{k+1}-\bm{x}_k) \Vert \leq (\frac{1}{\alpha_x}+L)\Vert \bm{v}_{k+1}-\bm{x}_k\Vert.
    \end{equation}
    If there exists $\bar{k}$ such that  $F\left(\bm{v}^{\bar{k}}\right)=F^{*}$, then  $F\left(\bm{v}^{\bar{k}}\right)=F\left(\bm{v}^{\bar{k}+1}\right)=\cdots=F^{*}$. So $\| \bm{v}^{\bar{k}+1}-   \bm{x}^{\bar{k}}\|=\| \bm{v}^{\bar{k}+2}-\bm{x}^{\bar{k}+1} \|=\cdots=0$. The conclusion holds. If  $F\left(\bm{v}_{k}\right)>F^{*}$  for all $k$, then from  $F\left(\bm{v}_{k}\right) \rightarrow F^{*}$ we know that there exists  $\hat{k}_{1}$  such that  $F\left(\bm{v}_{k}\right)<F^{*}+\eta$ whenever  $k>\hat{k}_{1}$. On the other hand, because  $\operatorname{dist}\left(\bm{v}_{k}, \Omega\right) \rightarrow 0$, there exists $\hat{k}_{2}$ such that  $\operatorname{dist}\left(\bm{v}_{k}, \Omega\right)<\varepsilon$ whenever $k>\hat{k}_{2}$. Let  $k>k_{0}=\max \left\{\hat{k}_{1}, \hat{k}_{2}\right\}$, we have
    \begin{equation}
        \bm{v}_{k} \in\{\bm{v}: \operatorname{dist}(\bm{v}, \Omega) \leq \varepsilon\} \bigcap\{F^{*}<F(\bm{v})<F^{*}+\eta\}.
    \end{equation}
    From the uniform KL property in Lemma \ref{lemma_kl}, there exists a concave function  $\varphi$ such that
    \begin{equation}
        \label{pf_kl_ineq}
        \varphi^{\prime}\left(F\left(\bm{v}_{k}\right)-F^{*}\right) \operatorname{dist}\left(0, \partial F\left(\bm{v}_{k}\right)\right) \geq 1.
    \end{equation}
    
    Define $r_{k}=F\left(\bm{v}_{k}\right)-F^{*}$. WLOG, We suppose that $r_{k}>0$ for all  sufficiently large $k$. Otherwise $F\left(\bm{v}_{k}\right)=F\left(\bm{v}_{k+1}\right)=\cdots=F^{*}$ and the algorithm terminates in finite steps. From (\ref{pf_kl_ub}), (\ref{pf_kl_ineq}) and (\ref{pf_suf_dscnt}) we have
    \begin{equation*}
        \begin{aligned}
            1 & \leq\left[\varphi^{\prime}\left(F\left(\bm{v}_{k}\right)-F^{*}\right) \operatorname{dist}\left(0, \partial F\left(\bm{v}_{k}\right)\right)\right]^{2} \\
            & \leq\left[\varphi^{\prime}\left(r_{k}\right)\right]^{2}\left(\frac{1}{\alpha_{x}}+L\right)^{2}\left\|\bm{v}_{k}-\mathbf{x}_{k-1}\right\|^{2} \\
            & \leq\left[\varphi^{\prime}\left(r_{k}\right)\right]^{2}\left(\frac{1}{\alpha_{x}}+L\right)^{2} \frac{F\left(\bm{v}_{k-1}\right)-F\left(\bm{v}_{k}\right)}{\left(\frac{1}{2 \alpha_{x}}-\frac{L}{2}\right)} \\
            &=d_{1}\left[\varphi^{\prime}\left(r_{k}\right)\right]^{2}\left(r_{k-1}-r_{k}\right),
        \end{aligned}
    \end{equation*}
    for all  $k>k_{0}$, where $d_{1}=\left(\frac{1}{\alpha_{x}}+L\right)^{2} /\left(\frac{1}{2 \alpha_{x}}-\frac{L}{2}\right)$. Because $\varphi$ has the form of  $\varphi(t)=\frac{C}{s} t^{s}$, we have $\varphi^{\prime}(t)=C t^{s-1}$. So it becomes
    \begin{equation}
        \label{pf_kl_diff}
        1 \leq d_{1} C^{2} r_{k}^{2 s-2}\left(r_{k-1}-r_{k}\right).
    \end{equation}
 
    \paragraph{1. Case $s=1$.}
    
    In this case, (\ref{pf_kl_diff}) becomes
    \begin{equation}
        1 \leq d_{1} C^{2}\left(r_{k}-r_{k+1}\right) .
    \end{equation}
    
    Because $r_{k} \rightarrow 0$ and $d_{1}>0, C>0$, this is a contradiction. So there exists $k_{1}$ such that $r_{k}=0$ for all $k>k_{1}$. The algorithm terminates in finite steps.
    
    \paragraph{2. Case $s\in\left[\frac{1}{2}, 1\right)$.}

    In this case,  $0<2-2 s \leq 1$. As $r_{k} \rightarrow 0$, there exists $\hat{k}_{3}$ such that $r_{k}^{2-2 s} \geq r_{k}$ for all $k>\hat{k}_{3}$. (\ref{pf_kl_diff}) becomes
    \begin{equation*}
        r_{k} \leq d_{1} C^{2}\left(r_{k-1}-r_{k}\right).
    \end{equation*}
    So we have for all $k_{2}>\max \left\{k_{0}, \hat{k}_{3}\right\}$
    \begin{equation*}
        r_{k} \leq\left(\frac{d_{1} C^{2}}{1+d_{1} C^{2}}\right)^{k-k_{2}} r_{k_{2}}.
    \end{equation*}
    and
    \begin{equation}
        F\left(\mathrm{x}_{k}\right)-F^{*} \leq F\left(\mathbf{v}_{k}\right)-F^{*}=r_{k} \leq\left(\frac{d_{1} C^{2}}{1+d_{1} C^{2}}\right)^{k-k_{2}} r_{k_{2}}.
    \end{equation}
    
    \paragraph{3. Case $s \in\left(0, \frac{1}{2}\right)$.}
    
    In this case, $2 s-2 \in(-2,-1), 2 s-1 \in(-1,0)$. As  $r_{k-1}>r_{k}$, we have $r_{k-1}^{2 s-2}<r_{k}^{2 s-2}$ and  $r_{0}^{2 s-1}<\cdots<r_{k-1}^{2 s-1}<r_{k}^{2 s-1}$.
    Define $\phi(t)=\frac{C}{1-2 s} t^{2 s-1}$, then  $\phi^{\prime}(t)=-C t^{2 s-2}$.
    
    If $r_{k}^{2 s-2} \leq 2 r_{k-1}^{2 s-2}$, then
    \begin{equation*}
        \begin{aligned}
        \phi\left(r_{k}\right)-\phi\left(r_{k-1}\right)
        &=C \int_{r_{k}}^{r_{k-1}} t^{2 s-2} d t \\
        &\geq C\left(r_{k-1}-r_{k}\right) r_{k-1}^{2 s-2} \\
        &\geq \frac{C}{2}\left(r_{k-1}-r_{k}\right) r_{k}^{2 s-2}, \\
        \end{aligned}
    \end{equation*}
    for all $k>k_{0}$.
    
    If $r_{k}^{2 s-2} \geq 2 r_{k-1}^{2 s-2}$, then $r_{k}^{2 s-1} \geq 2^{\frac{2 s-1}{2 s-2}} r_{k-1}^{2 s-1}$, and
    \begin{equation*}
        \begin{aligned}
        \phi\left(r_{k}\right)-\phi\left(r_{k-1}\right) &=\frac{C}{1-2 s}\left(r_{k}^{2 s-1}-r_{k-1}^{2 s-1}\right) \\
        & \geq \frac{C}{1-2 s}\left(2^{\frac{2 s-1}{2 s-2}}-1\right) r_{k-1}^{2 s-1} \\
        &=q r_{k-1}^{2 s-1} \\
        &\geq q r_{0}^{2 s-1}, \\
        \end{aligned}
    \end{equation*}
    where $q=\frac{C}{1-2 s}\left(2^{\frac{2 s-1}{2 s-2}}-1\right)$.
    
    Let $d_{2}=\min \left\{\frac{1}{2 d_{1} C}, q r_{0}^{2 s-1}\right\}$, we have
    \begin{equation*}
        \phi\left(r_{k}\right)-\phi\left(r_{k-1}\right) \geq d_{2},
    \end{equation*}
    for all  $k>k_{0}$ and
    \begin{equation*}
        \phi\left(r_{k}\right) \geq \phi\left(r_{k}\right)-\phi\left(r_{k_{0}}\right) \geq \sum_{i=k_{0}+1}^{k} \phi\left(r_{i}\right)-\phi\left(r_{i-1}\right) \geq\left(k-k_{0}\right) d_{2}.
    \end{equation*}
    Hence,
    \begin{equation*}
         r_{k} \leq\left(\frac{C}{\left(k-k_{0}\right) d_{2}(1-2 s)}\right)^{\frac{1}{1-2 s}},
    \end{equation*}
    and
    \begin{equation}
         F\left(\mathbf{x}_{k}\right)-F^{*} \leq F\left(\mathbf{v}_{k}\right)-F^{*}=r_{k} \leq\left(\frac{C}{\left(k-k_{3}\right) d_{2}(1-2 s)}\right)^{\frac{1}{1-2 s}},
    \end{equation}
    which completes the proof.

\end{itemize}

\end{proof}

\section{Technical details}
\setcounter{equation}{0}
\setcounter{theorem}{0}
\label{appendix_notes}

\subsection{Derivation of gradient}
\label{subsec:grad}

Consider $f(\theta, W)=\sum_k l_k(\theta_k)+\frac{1}{2}\sum_{i<j}W_{ij}\Vert \theta_i - \theta_j\Vert_{2}^{2}+\frac{\lambda_2(1-\eta)}{2}\Vert W-W_0 \Vert_{F}^{2}-\lambda_1 \log{\det{(\mu I+\mathcal{G}(W))}}$. We want to compute the gradient with respect to $W$. Although $W$ must be symmetric and diagonal-free in the algorithm, we think it has $K(K-1)$ variables when deriving its gradient.

Rewrite the first term $f_1:=\frac{1}{2}\sum_{i<j}W_{ij}\Vert \theta_i - \theta_j\Vert_{2}^{2}=\frac{1}{4}\sum_{i\neq j}W_{ij}\Vert \theta_i - \theta_j\Vert_{2}^{2}$. It has gradient $\nabla f_1=(\Vert \theta_i - \theta_j\Vert_{2}^{2} / 4)_{ij}$. 

The second term $f_2:=\frac{\lambda_2(1-\eta)}{2}\Vert W-W_0 \Vert_{F}^{2}$ has gradient $\nabla f_2=\lambda_2(1-\eta)(W-W_0)$. Note that this is valid because the diagonal elements are all zero. 

After tedious calculus, we can derive the last term $f_3:=-\lambda_1 \log{\det{(\mu I+\mathcal{G}(W))}}$ has gradient $\nabla f_3=-\lambda_1 P\left((\mu I+\mathcal{G}(W))^{-1}\right)$, where $[P(L)]_{ij}=L_{ii}-L_{ij}$. Again, the diagonal elements are all zero.

Finally we obtain $\nabla_{W} f(\theta, W)=\nabla f_1+\nabla f_2+\nabla f_3$, which is not symmetric generally but is diagonal-free.

\subsection{Derivation of proximal operator}
Let $g(W)=\lambda_2\eta\Vert W\Vert_{1}+\sum_{i,j}\mathcal{I}_{W_{ij}=W_{ji}\geq 0}+\mathcal{I}_{\textbf{diag} (W)=0}$. We want to find $\prox_{\alpha g}(U) := \mathop{\arg\min}\limits_{W} g(W)+\frac{1}{2\alpha}\Vert W-U\Vert_{F}^{2}$. Since $W_{ij}=W_{ji}$, we take them as a single variable. The KKT condition is that 
\begin{equation*}
    0\in \lambda_2 \eta \alpha \partial \Vert W_{ij}\Vert_1 + \partial \mathcal{I}_{W_{ij}\geq 0} + (W_{ij} - \frac{U_{ij}+U_{ji}}{2}),\ \text{ for } i<j.
\end{equation*}
Therefore $W_{ij}=W_{ji}= \max{(0,\frac{U_{ij}+U_{ji}}{2}-\lambda_2 \alpha \eta)}$ for $i<j$ and $W_{ii}=0$. A key observation is that in our MAPG algorithm, $U$ has the form of $\widetilde{W}-\alpha\nabla f(\widetilde{W})$. And $U$ is diagonal-free as long as $\widetilde{W}$ is (see \ref{subsec:grad}). So the expression $\prox_{\alpha g}(K)=\max{(0, \frac{U+U^T}{2}-\lambda_2 \alpha \eta)}$ is valid, as long as $W_0$ and the initial input $W^{(0)}$ are diagonal-free.

\end{document}